\documentclass{article}

\usepackage[final]{neurips_2026}

\usepackage[utf8]{inputenc}
\usepackage[T1]{fontenc}
\usepackage{hyperref}
\usepackage{url}
\usepackage{booktabs}
\usepackage{amsmath,amssymb,amsfonts}
\usepackage{amsthm}
\usepackage{nicefrac}
\usepackage{microtype}
\usepackage{xcolor}
\usepackage{bm}
\usepackage{enumitem}
\usepackage{algorithm}
\usepackage{algpseudocode}
\usepackage{graphicx}
\usepackage{siunitx}
\usepackage{mathtools}
\usepackage{amsthm}
\newtheorem{theorem}{Theorem}
\newtheorem{proposition}{Proposition}
\newtheorem{remark}{Remark}

\usepackage{etoc}
\usepackage{wrapfig} 
\usepackage{float}

\title{Learning Compositional Latent Structure \\ with Vector Networks}
\author{
\begin{tabular}{c}
~\\[-0.2em]
\textbf{Niclas Pokel}\\
\textnormal{Institute of Neuroinformatics,}\\
\textnormal{UZH / ETH Zurich}\\
\textnormal{ETH AI Center}\\
\textnormal{Zurich, Switzerland}\\
\texttt{npokel@ethz.ch}
\end{tabular}
\and
\begin{tabular}{c}
~\\[-0.2em]
\textbf{Benjamin F.~Grewe}\\
\textnormal{Institute of Neuroinformatics,}\\
\textnormal{UZH / ETH Zurich}\\
\textnormal{ETH AI Center}\\
\textnormal{Zurich, Switzerland}\\
\texttt{bgrewe@ethz.ch}
\end{tabular}
}
\date{}

\newcommand{\R}{\mathbb{R}}
\newcommand{\vect}[1]{\bm{#1}}
\newcommand{\mat}[1]{\bm{#1}}
\newcommand{\ood}{\textsc{ood}}
\newcommand{\id}{\textsc{id}}

\begin{document}
\maketitle
\begin{abstract}
Deep networks are powerful function approximators, but they typically store many different computations in shared weight matrices, making it difficult to selectively reuse or adapt parts of them when a familiar structure appears in novel combinations. We introduce the \textbf{Vector Network} (VN), a hierarchical recurrent architecture in which each layer replaces a fixed weight matrix with a library of reusable rank-1 weight atoms. For each input, VN minimizes a layer-local energy to infer a sparse set of active weight atoms and their coefficients, jointly constrained by bottom-up input reconstruction and top-down feedback consistency. These weight atom coefficients then compose an input-specific low-rank weight matrix for that sample. After convergence, slow learning updates only the selected weight atoms through local residual signals scaled by the inferred coefficients. We evaluate VN on four compositional benchmarks spanning 1D signals, 2D spatial decoding, N-body dynamics, and compositional MNIST. VN matches strong baselines in distribution while often achieving out-of-distribution error about an order of magnitude lower when familiar factors must be recombined in novel ways. Vector networks thus make compositional generalization a structural property of the architecture and inference process rather than a brittle byproduct of fitting many behaviors into one shared dense parameter substrate.
\end{abstract}

\section{Introduction}
\label{sec:intro}

Modern deep networks trained with error backpropagation typically rely on a shared dense parameter substrate to implement many different computations. Because gradients are aggregated at the outputs and propagated through all layers, many different behaviors are written into the same shared weight space. Over training, this can produce what we term \emph{weight entanglement}: distinct features, skills, and behaviors become encoded in overlapping directions of the same weight matrix \cite{elhage2022toy}. Improvements for one behavior tend to move parameters along directions that also affect others. Since the objective constrains only the final fit, not how internal computations are decomposed, there is no direct pressure for weights to stabilize into identifiable, reusable functional components that could later be selectively recombined, although post-hoc interpretability methods can sometimes identify responsible human-aligned units or circuits \cite{bau2017network, olah2020zoom}.

This matters because, in a compositional system, one would like \emph{addressable responsibility in weight space}: learned weight components whose contributions can be isolated, recombined into a task-specific function, and updated independently. Standard deep networks do not naturally provide such persistent objects in parameter space. The problem is therefore not only that representations become mixed, but that the corresponding transformations are stored in one shared substrate rather than in reusable units. Several existing approaches partly address this problem. Routing-based models such as Mixture-of-Experts and capsules \cite{jacobs1991adaptive, shazeer2017sparsely, sabour2017dynamic, hinton2018matrix} introduce selective computation, but their selectivity operates through feedforward routing over dense modules or experts, whereas VN performs iterative sparse inference over a library of much smaller shared rank-1 weight components. Other approaches impose useful structure through post-hoc low-rank factorization, convolutional locality, or modular and object-centric inductive biases \cite{denil2013predicting, denton2014exploiting, lecun1998gradient, greff2019multi, locatello2020object}. These methods can be highly effective, but they still do not make reusable per-sample responsibility in weight space explicit. More generally, without appropriate inductive biases, latent factors are not typically recovered as reusable structure from observations alone \cite{locatello2019challenging}. Here we argue that if those factors are to be recombined to explain new inputs, the learned transformation must also itself become decomposable into reusable components.

Our \textbf{Vector Network} (VN) architecture thus takes a different approach (Fig.~\ref{fig:vector_scheme}A--C): it neither assigns predefined semantic roles to particular weight modules nor uses one fixed weight matrix. Instead, each layer is parameterized as a library of reusable rank-1 weight atoms (Fig.~\ref{fig:vector_scheme}A), each defined by a learned code vector pair as in Fig.~\ref{fig:vector_scheme}B, and for each input the model infers a sparse combination of these atoms to synthesize the effective transformation applied to that sample. \emph{Sparse Weight-Atom Inference} optimizes a scalar coefficient for each atom under sparsity, layer-input reconstruction, and top-down feedback consistency, so that only a small active set remains. After inference converged, the active weight atoms define the effective input-specific weight matrix (Fig.~\ref{fig:vector_scheme}C). The same inferred sparse responsibility pattern concentrates slow learning on that selected subset rather than writing every sample directly into one shared full-rank matrix. In this way, vector networks make compositionality an explicit weight-space inference problem, allowing learned weight components to be selectively recombined into input-specific transformations for novel combinations of familiar structure.

\begin{figure}
    \centering
    \begin{minipage}[t]{0.45\linewidth}
        \vspace{0pt}
        \centering
        \IfFileExists{Figures/fig1.png}{%
            \includegraphics[width=\linewidth]{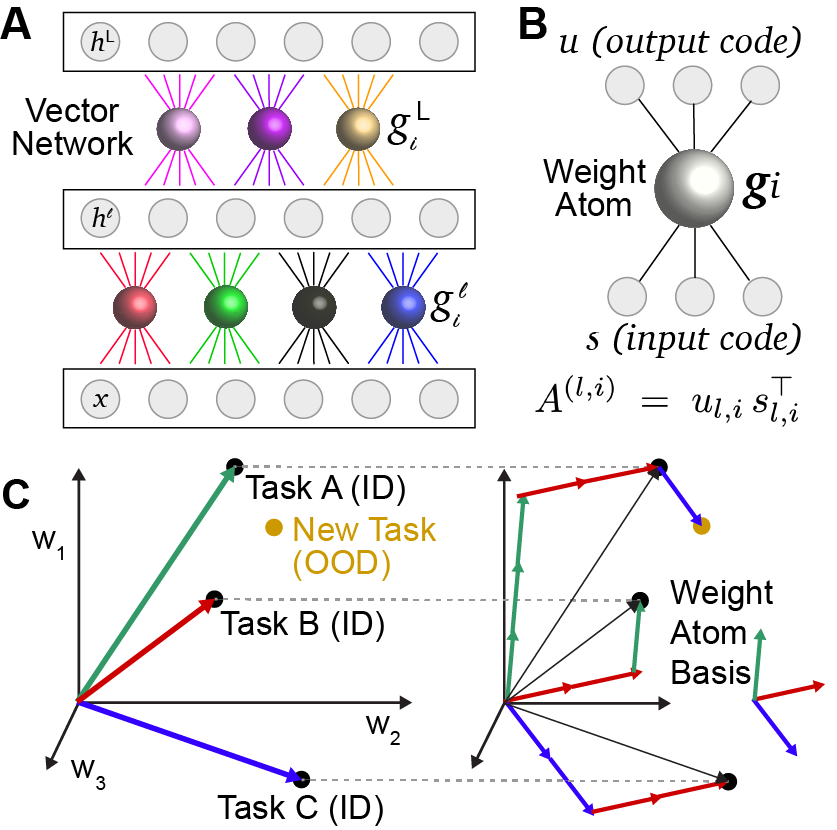}%
        }{%
            \fbox{\rule{0pt}{0.94\linewidth}\rule{0.94\linewidth}{0pt}}%
        }
    \end{minipage}\hfill
    \begin{minipage}[t]{0.55\linewidth}
        \vspace{0pt}
        \caption{\textbf{Vector Networks Compose Input-Specific Function from Sparse Weight Atoms.}
        \textbf{(A)} A VN layer replaces one dense weight matrix with reusable rank-1 weight atoms. Sparse coefficients \(g_{l,i}\) are inferred to reconstruct the current input \(x\) while propagating upward information to \(h_l\) and receiving reverse reconstruction targets from above.
        \textbf{(B)} A weight atom is a pair of code vectors defining \(A^{(l,i)}=u_{l,i}s_{l,i}^{\top}\), a rank-1 weight atom mapping sensing space \(s_l\) to output space \(u_l\). The activity \(g_{l,i}\) scales its contribution.
        \textbf{(C)} Standard networks apply a fixed monolithic matrix \(W\), so tasks share the same parameter space. VN instead composes a few fixed weight atoms into \(W_l^\star(x_l)=\sum_i g_{l,i}^\star(x_l)A^{(l,i)}\), allowing new inputs to be processed using recombinations of learned function components rather than interpolations within a fixed mapping.}
        \label{fig:vector_scheme}
    \end{minipage}
    \vspace{-10pt}
\end{figure}

\paragraph{Contributions.}
\begin{enumerate}[leftmargin=*]
\item \textbf{Rank-1 weight atoms as fundamental building blocks to compose network function.}
Each layer maintains a dictionary of reusable rank-1 weight atoms, allowing network function to be addressable, inspectable, and composable rather than entangled in a single dense weight matrix.

\item \textbf{The effective weight matrix is inferred by energy minimization over weight atom coefficients.}
For every sample a fast inference process minimizes a layer-local energy over weight atom coefficients under a sparsity constraint. Only after convergence do the inferred weight atoms define a tailored, input-specific \(W^\star(x)\).

\item \textbf{After inference, the selected weight atoms receive Atomic-Hebb updates.}
Slow learning follows the same energy locally, shaping the weight atom library through weight-atom-local residual updates scaled by inferred weight atom responsibility, so only selected atoms are modified.

\item \textbf{Sparse weight-atom responsibility structurally limits weight-update overlap.}
Because updates act only on the sparse active set, only a few responsible weight atoms are modified while inactive ones remain largely unchanged, limiting diffuse overlap between unrelated updates.

\end{enumerate}

\section{Related Work}
\label{sec:related}

\paragraph{Sparse coding, ICA, and dictionary learning.}
Classical sparse coding and ICA showed that observations can admit sparse, parts-based representations with explicit support selection \cite{olshausen1996emergence, olshausen1997sparse, bell1995information, bell1997independent, rozell2008sparse}. This gives a clear notion of responsibility, since only the active support explains the current input and is updated. VN inherits the sparse-inference idea, but shifts it from latent-state selection to inference over reusable \emph{weight atoms} that define the sample-specific transformation itself.

\paragraph{Predictive coding and energy-based inference.}
Predictive-coding and energy-based models also use iterative inference with local learning \cite{rao1999predictive, friston2010free, whittington2017approximation, oliviers2026bidirectional}. VN is closely related in optimization structure: here too, fast variables are latent coefficients inferred under fixed learned dictionaries. The distinction is architectural rather than procedural. In standard predictive coding or sparse coding, inferred activities are latent states within a fixed mapping. In VN, each inferred coefficient scales a learned rank-1 operator \(u_{l,i}s_{l,i}^{\top}\), so inference selects and composes the sample-specific operator itself rather than only the latent state passed through a fixed weight matrix. For a detailed comparison see Appendix~\ref{app:pc_vs_vn}.

\paragraph{Routing, modularity, and object-centric approaches.}
Mixture-of-experts, capsules, and object-centric models address compositionality by routing among experts, part-whole representations, or object slots \cite{jacobs1991adaptive, shazeer2017sparsely, sabour2017dynamic, hinton2018matrix, greff2019multi, locatello2020object}. These approaches can be highly effective, but their selectivity operates through feedforward routing over dense modules or latent entities, whereas VN uses iterative sparse inference under an explicit energy to select a competitive combination of shared rank-1 weight components that together define the sample-specific transformation.

\paragraph{Low-rank factorization and parameter-efficient adaptation (PEA).}
Low-rank factorizations and PEA methods constrain weights or updates to lower-dimensional subspaces \cite{denil2013predicting, denton2014exploiting, Hu2022LoRA}, but they do not usually provide explicit per-sample responsibility over reusable weight components.

\section{Vector Network Architecture}
\label{sec:vector-arch}

Next we define the core vector network architecture and weight-atom inference. Concretely, the modular structure is implemented by stacked layers of rank-1 weight atoms (Fig.~\ref{fig:vector_scheme}A). Each layer infers sparse weight atom coefficients for the current input, uses them to reconstruct the input and send a dense interface message upward, and receives a top-down target from the layer above during recurrent refinement. We index layers by \(l\in\{1,\dots,L\}\). Each layer has an input vector \(x_l\in\R^{d_l}\), a sparse signed activity vector \(g_l\in\R^{K_l}\), and a top-down interface target \(h_{l,\mathrm{target}}\in\R^{m_l}\). Each layer stores an input reconstruction dictionary
\(
S_l\in\R^{d_l\times K_l}
\)
and an interface dictionary
\(
U_l\in\R^{m_l\times K_l}
\),
with columns \(\{s_{l,i}\}\) and \(\{u_{l,i}\}\). Implementations may use separate interface dictionaries for forward and backward messaging; below we use the tied case \(U_l\) for readability. Dense coupling enforces \(d_{l+1}=m_l\) and identifies the next-layer input with the dense message,
\(
x_{l+1}\equiv \phi_l(U_l g_l).
\)

\subsection{Rank-1 Weight Atoms as Forward-Feedback Operators}
\label{sec:arch}

In layer \(l\), weight atom \(i\) is a paired set of unit code vectors in input and output space (Fig.~\ref{fig:vector_scheme}B; maintained at unit norm by the per-column renormalisation in Sec.~\ref{sec:slow}), which together define the rank-1 atom \(A^{(l,i)}\):
\begin{equation}
s_{l,i}\in\R^{d_l},\qquad u_{l,i}\in\R^{m_l},\qquad A^{(l,i)} = u_{l,i}s_{l,i}^{\top}.
\end{equation}
After inference has settled, the weight atom coefficients \(g_l^\star(x_l)\in\R^{K_l}\) synthesize an instance-conditioned weight matrix as a short sum of these weight atoms (Fig.~\ref{fig:vector_scheme}C),
\begin{equation}
W_l^\star(x_l) \;=\; \sum_{i=1}^{K_l} g^\star_{l,i}(x_l)\,A^{(l,i)} \;=\; \sum_{i=1}^{K_l} g^\star_{l,i}(x_l)\,u_{l,i}\,s_{l,i}^{\top} .
\label{eq:operator-synthesis}
\end{equation}
The same atom coefficients \(g_l\) serve two coupled roles: they reconstruct the current layer input as \(\hat x_l=S_l g_l\) and, through the layer output dictionary \(U_l\), generate the dense upward message \(h_l=\phi_l(U_l g_l)\) passed to the next layer, where \(\phi_l\) is typically identity or ReLU. Thus, inference does not apply one fixed dense matrix. Instead, it infers a sparse code \(g_l^\star\) that both selects a small set of reusable weight atoms and determines the upward message. We use \(W_l^\star(x_l)\) as a compact operator description induced by the inferred sparse code.
Top-down information enters each layer through \(h_{l,\mathrm{target}}\), a target later refreshed from the higher layer during recurrent inference (Sec.~\ref{sec:fast}). Because each \(A^{(l,i)}\) is rank 1, the rank of \(W_l^\star(x_l)\) is limited by how many weight atoms are active in \(g_l^\star\). Under an explicit hard top-\(k_l\) constraint, at most \(k_l\) atom coefficients are allowed to be nonzero, so the per-sample transformation has rank at most \(k_l\) regardless of the total dictionary size (see Proposition~\ref{prop:rank}, Appendix~\ref{app:theory:capacity}).
Because the reconstruction map is linear in the sparse code, superpositions are preserved at the decoder level: for two codes \(g_a\) and \(g_b\), \(S_l(g_a + g_b) = S_l g_a + S_l g_b\). Superposition is therefore preserved exactly, while exact recovery of novel compositions depends on the sparse inference conditions stated in App.~\ref{app:theory:composition}.

\subsection{Layer-Local Energy for Input Reconstruction, Atom Sparsity, and Top-Down Consistency}
\label{sec:layer_energy}

The deep VN tracks a layer-local energy in the sparse coefficients \(g_l\). For layer \(l\), we define the tied energy form as
\begin{equation}
\label{eq:layer_energy}
E_l(g_l \mid x_l, h_{l,\mathrm{target}})
=
\tfrac12\|x_l - S_l g_l\|_2^2
\;+\; \lambda_l \|g_l\|_1
\;+\; \tfrac12\|h_{l,\mathrm{target}} - U_l g_l\|_2^2 .
\end{equation}
Here \(\lambda_l\) is the layer-specific sparsity parameter, typically increased with depth so that higher layers use progressively sparser, more selective codes. The energy terms have the following roles: (i) \(S_l g_l\) reconstructs the current input \(x_l\), and (ii) \(U_l g_l\) remains consistent with \(h_{l,\mathrm{target}}\), the interface code supplied by deeper layers. At the top layer there is no higher layer to provide \(h_{L,\mathrm{target}}\), so the top-down term is omitted. The \(\ell_1\) penalty induces sparsity through the proximal soft-thresholding update
\(\mathrm{soft}(z,\tau)=\mathrm{sign}(z)\max(|z|-\tau,0)\),
setting coefficients below threshold to zero. Hidden layers use this soft-thresholding update, while the top layer is additionally constrained by a hard top-\(k_L\) support cap, meaning that at most \(k_L\) coefficients are allowed to remain nonzero.

For a single layer (fixed \(x_l,h_{l,\mathrm{target}},S_l,U_l\)), \(E_l\) acts as a Lyapunov energy for the fast coefficient inference: each accepted proximal step is required to decrease or leave unchanged this energy until a sparse fixed point is reached. In the full deep stack, the layer targets are themselves refreshed during recurrent inference, so the summed layer-local terms serve as per-sweep energy accounting rather than as a strict global Lyapunov proof. The same energy organizes both phases of the model: fast inference minimizes it with respect to the atom activities \(g_l\), while slow learning minimizes it with respect to the selected atom vectors. We next describe these two uses in turn.

\subsection{Fast Inference of a Sparse Set of Rank-1 Weight-Atom Coefficients}
\label{sec:fast}
Inference in the deep VN proceeds by a small number of global upward and downward sweeps. At each layer, coefficient inference is a proximal-gradient step on the layer energy \(E_l\), i.e.\ an ISTA-style update~\cite{parikh2014proximal}, with optional FISTA-style acceleration~\cite{beck2009fast}. For layer \(l\), we denote by \(\nabla f_l(g_l)\) the gradient of the smooth part of \(E_l\), i.e.\ the quadratic reconstruction and top-down consistency terms:
\begin{equation}
\nabla f_l(g_l) = S_l^\top(S_l g_l-x_l)+U_l^\top(U_l g_l-h_{l,\mathrm{target}}),
\end{equation}
where the \(U\) term is omitted when \(h_{l,\mathrm{target}}\) is not defined (top layer). With layer-specific step size \(\eta_l>0\), the soft-\(k\) update and its hard-\(k\) variant are
\begin{equation}
\label{eq:vn_update_soft}
g_l \leftarrow \mathrm{soft}\!\left(g_l-\eta_l\,\nabla f_l(g_l),\ \eta_l\lambda_l\right),
\end{equation}
\begin{equation}
\label{eq:vn_update_hard}
g_l \leftarrow \Pi_{k_l}\!\Big(\mathrm{soft}\!\big(g_l-\eta_l\,\nabla f_l(g_l),\ \eta_l\lambda_l\big)\Big).
\end{equation}
Eq.~\eqref{eq:vn_update_soft} is the standard proximal soft-thresholding step for the \(\ell_1\) term, so sub-threshold coefficients are set exactly to zero. Eq.~\eqref{eq:vn_update_hard} adds an explicit top-\(k_l\) projection \(\Pi_{k_l}\), which keeps the \(k_l\) largest-magnitude coefficients when a strict cardinality budget is required. In our experiments, hidden layers use the soft-\(k\) regime, while the top layer may use a strict hard top-\(k\) budget (see Remark~\ref{remark:topk} for the design rationale). At the network level, these local coefficient updates are organized into recurrent upward and downward passes, with the downward pass acting as a reverse reconstruction path in an encoder--decoder-like settle:
\begin{equation}
\label{eq:vn_sweeps}
\text{upward pass: } x_{l+1}\leftarrow h_l=\phi_l(U_l g_l),
\qquad
\text{downward pass: } h_{l,\mathrm{target}} \leftarrow \hat x_{l+1}=S_{l+1}g_{l+1}.
\end{equation}
In the upward pass, one coefficient update is taken at each layer as activity propagates bottom-up. In the downward pass, for each \(l=L{-}1,\dots,1\), the refreshed target is used to refine \(g_l\) again through the same update rule, now using both bottom-up and top-down residuals; the top-down term plays the role of a predictive-coding consistency signal, which we discuss further in App.~\ref{app:theory:topdown}. Inference stops when the relative energy improvement falls below a tolerance \(\tau_E\) or after a maximum number of sweeps; a sweep may also be rejected if it increases energy. With \(\eta_l\) scaled by the inverse Lipschitz constant of the local Hessian, the soft-\(k\) proximal updates monotonically decrease the layer energy and converge to a stationary point (Theorem~\ref{thm:convergence}, Appendix~\ref{app:theory:convergence}).

\subsection{Slow Learning Updates the Inferred Rank-1 Weight Atoms}
\label{sec:slow}

Once fast inference has converged, learning updates only the selected atom vectors, i.e.\ the dictionary columns with nonzero coefficients, using strictly local residual\(\times\)atom coefficient signals. The inactive atoms are left unchanged, and inference itself is treated as a black box that is \emph{not} differentiated through.
For layer \(l\), we define the settled residual errors in the two spaces as
\begin{equation}
r_l^x = x_l - S_l g_l^\star,\qquad r_l^h = h_{l,\mathrm{target}} - U_l g_l^\star
\end{equation}
(when \(h_{l,\mathrm{target}}\) is defined). A direct local gradient step is then
\begin{equation}
S_l \leftarrow S_l + \rho^S_l\,r_l^x\,(g_l^\star)^\top,
\qquad
U_l \leftarrow U_l + \rho^U_l\,r_l^h\,(g_l^\star)^\top,
\end{equation}
where \(r\,g^\top\) denotes the batch-averaged outer product over a batch of size \(B\), and \(\rho^S_l,\rho^U_l\) are local learning rates. For the top layer (no \(h_{L,\mathrm{target}}\)), the interface update is skipped. In the implementation, these residual\(\times\)atom-coefficient terms are used as local gradients for Adam, followed by renormalization. We term this update rule \emph{Atomic-Hebb} because it is distinct from classical pre--post(error) Hebbian type plasticity: each step combines local residual error with inferred weight-atom contribution, so credit assignment is weight-atom local rather than neuron local. Further, by applying Danskin's Theorem at the inference equilibrium, we prove that these local residual products are exactly equivalent to the implicit gradients of the global free energy (see Theorem~\ref{thm:danskin}, Appendix~\ref{app:theory:gradients}).

\section{Experiments}
\label{sec:experiments}

Across the following four experiments, models are trained in distribution (ID) and evaluated on both ID and out-of-distribution (OOD) cases formed by novel combinations of previously seen factors. OOD cases span increasing compositional difficulty, from simple recombinations to harder interactions of familiar factors. Because most are reconstruction tasks, we report absolute reconstruction error and how performance changes from familiar ID structure to composed OOD cases.

\subsection{Spatial Composition: Generalizing to Novel Bump Positions}
\label{sec:blob}

We begin with a simple spatial reconstruction benchmark adapted from Liang et al.~\cite{fiete2025}, because it isolates the weight entanglement problem in a particularly clean form. As shown in Fig.~\ref{fig:blob_fiat}a, the model receives a low-dimensional code for the \(x\)- and \(y\)-position of a target bump and must reconstruct a Gaussian bump at that location on a \(28\times 28\) grid. Training is restricted to a local spatial region outside a square holdout zone, while OOD evaluation probes that unseen center region. This makes the benchmark a minimal test of whether a learned decoder can preserve a reusable spatial factorization off support, rather than only interpolating within the training region.

Our reproduction recovers the basic pattern reported by Liang et al. \cite{fiete2025}. The spatial error maps in Fig.~\ref{fig:blob_fiat}b show that standard CNN- and VAE-based decoders develop pronounced OOD error inside the holdout zone, whereas hand-factorized decoders remain much more structured but less performant. Quantitatively, Fig.~\ref{fig:blob_fiat}c shows the core tradeoff: the CNN exhibits a large OOD decline, whereas the hand-built rank-1 factorized decoder generalizes more predictably but remains limited in absolute reconstruction performance. VN improves on both, combining low OOD error with strong ID performance. Figure~\ref{fig:blob_fiat}d strengthens this point by showing that the OOD/ID ratio of the CNN decoder worsens with depth, i.e.\ the deeper the network, the more strongly entangled its off-support input--output mapping becomes.

VN seems to combine the two desiderata more effectively. A VN decoder maintains uniformly low reconstruction error across the full spatial domain (Fig.~\ref{fig:blob_fiat}b). It achieves substantially lower OOD error while preserving strongest ID performance (Fig.~\ref{fig:blob_fiat}c), and even improves slightly with network depth (Fig.~\ref{fig:blob_fiat}d). The full VN encoder--decoder improves robustness further by discovering the factorization end to end rather than receiving it as a hand-built prior. Thus, even in this simple toy setting, VN does not merely avoid CNN weight entanglement; it learns some form of a reusable decomposition in its weight atoms that supports reconstruction in the unseen spatial region. For extended results on this experiment please see Appendix~\ref{app:fiete_extended}; full sweeps and component-removal ablations are reported in Apps.~\ref{app:fiete_vn_ablations}--\ref{app:entangled}.

\begin{figure}[H]
    \noindent
    \begin{minipage}[t]{0.5\linewidth}
        \vspace{0pt}
        \IfFileExists{Figures/fig2.png}{%
            \includegraphics[width=\linewidth]{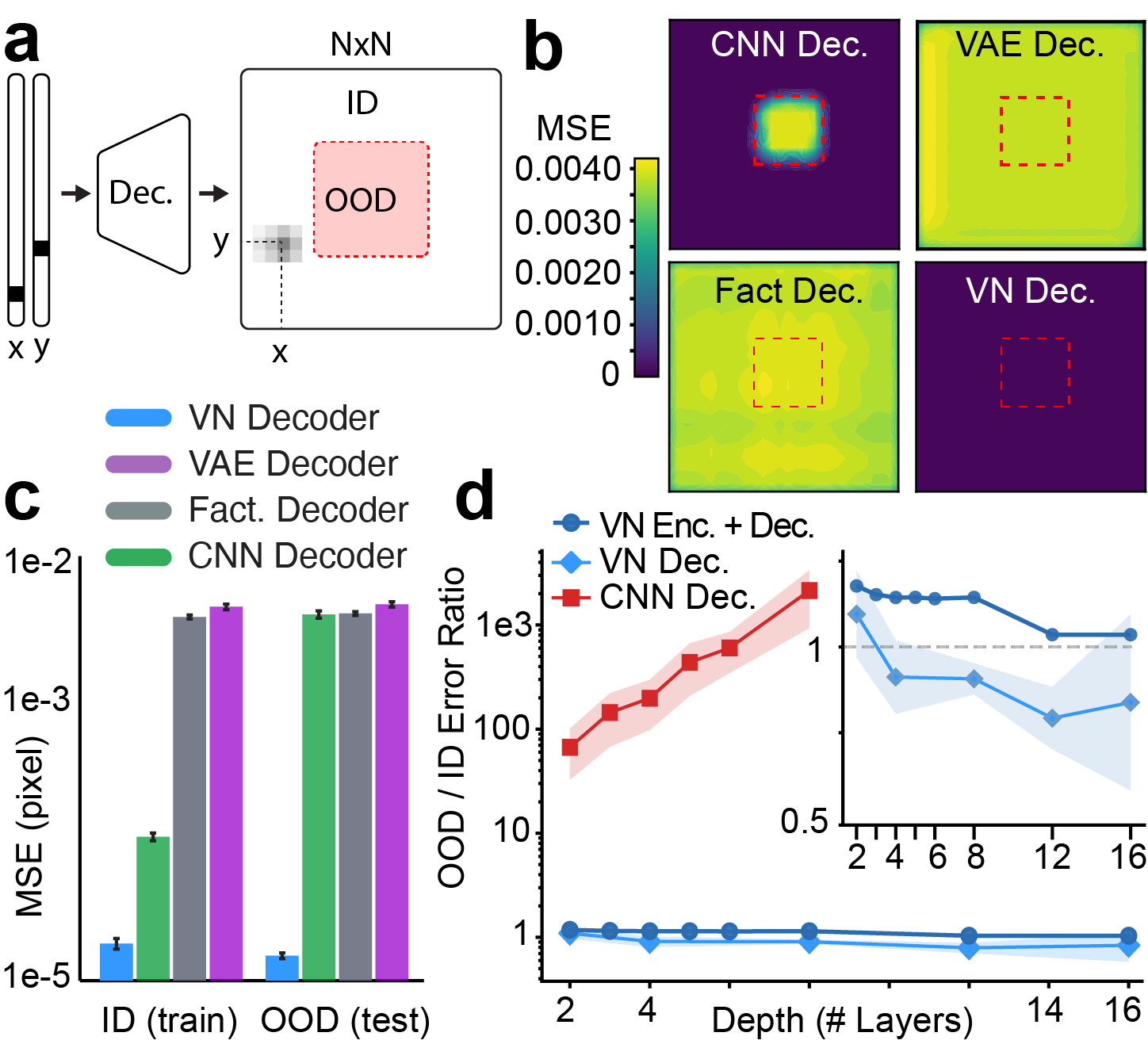}%
        }{%
            \fbox{\rule{0pt}{0.45\textheight}\rule{0.95\linewidth}{0pt}}%
        }
    \end{minipage}\hfill
    \begin{minipage}[t]{0.48\linewidth}
        \vspace{0pt}
        \caption{
        \textbf{Spatial Composition: Generalizing to Novel Bump Positions}
        \textbf{(a)} Task setup (adapted from \cite{fiete2025}). A decoder maps \((x,y)\) to an \(N\times N\) output grid. Training covers a local ID region, while testing probes an unseen OOD region.
        \textbf{(b)} Spatial reconstruction error (MSE) across the grid. CNN, VAE, and factorized decoders show strong OOD error, whereas the VN decoder remains low across the full space.
        \textbf{(c)} Reconstruction MSE on ID and OOD regions. VN-based models achieve much lower OOD error while maintaining strong ID performance. Error bars are 95\% CI.
        \textbf{(d)} OOD/ID error ratio vs.\ depth. CNN decoders degrade rapidly with depth, whereas VN remains stable and improves with depth (inset: zoom on VNs).
        }
        \label{fig:blob_fiat}
    \end{minipage}
    \vspace{-8pt}
\end{figure}

\subsection{Function Composition: Generalizing to Unseen Signal Combinations}
\label{sec:functions}

We next test the compositional capabilities of VN by training on single primitive functions and evaluating on unseen compositions of these primitives (Fig.~\ref{fig:fc_overview}a). Each 1D signal is sampled from one of four elementary function families---\(a\sin(ft+\phi)\), \(a\cos(ft+\phi)\), a bounded polynomial \(a(t/2\pi)^f\), or a Gaussian bump \(a\,e^{-0.3f(t-\pi)^2}\)---with randomized parameters. The ID task is to reconstruct signals from a single family, while the OOD sets require novel compositions of those same primitives: Easy-OOD uses cross-family sums such as \(\sin(f_1 t+\phi_1)+a\cos(f_2 t+\phi_2)\), whereas Hard-OOD uses nested and mixed nonlinear compositions such as \(\sin\!\bigl(\cos(f_1 t)+f_2 t/(2\pi)\bigr)+a\cos\!\bigl(f_3(t/(2\pi))^2\bigr)\). This first part of the benchmark is therefore a reconstruction task, asking whether a model can reuse familiar components when the signal structure changes beyond the training distribution.

\begin{figure}[h!]
    \centering
    \includegraphics[width=1\linewidth]{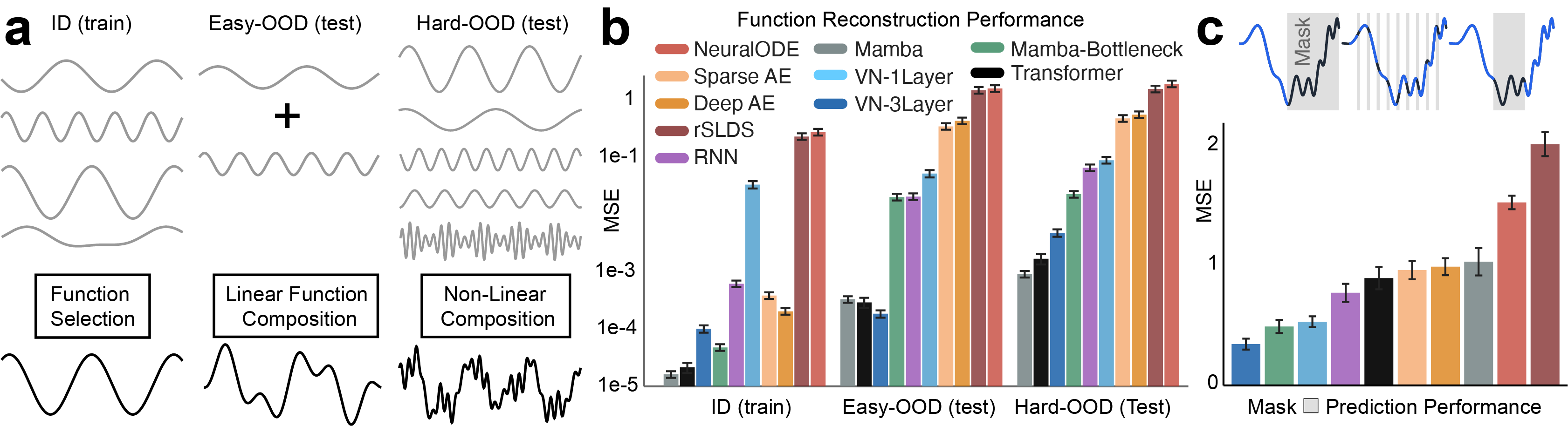}%
    \caption{
    \textbf{Function Composition: Generalizing to Novel Signal Combinations. }
    \textbf{(a)} Training and test structure. Models are trained on individual function families drawn from the in-distribution (ID) set. Easy-OOD tests require linear recombination of learned functions, whereas Hard-OOD tests require non-linear compositions of previously learned functions.
    \textbf{(b)} Reconstruction MSE for ID, Easy-OOD, and Hard-OOD conditions.
    \textbf{(c)} Average Hard-OOD masked prediction MSE across the four masking regimes shown above. All bar plots are sorted by mean performance; error bars are 95\% CI. VN-1L/3L: 1 and 3 layer VN.
    }
    \label{fig:fc_overview}
    \vspace{-8pt}
\end{figure}

The reconstruction results in Fig.~\ref{fig:fc_overview}b show that the benchmark is nontrivial, but they do not yet cleanly separate compositional structure learning from strong sequence modeling. In particular, both Transformer and Mamba outperform VN on the Easy-OOD and Hard-OOD reconstruction settings. At the same time, the bottlenecked Mamba variant performs worse than Mamba on ID but substantially better on OOD, suggesting that constraining the representation already pushes the model toward more reusable internal structure.

This ambiguity arises because reconstruction still admits shortcut solutions: when much of the relevant signal is directly present in the input, a strong sequence model can achieve low error by preserving or continuing that structure without recovering components that can be recombined systematically. VN, by contrast, is intentionally sparsity constrained and must explain each sample through a small active set of weight atoms. Reconstruction alone is therefore informative, but not yet a clean test of genuinely compositional internal structure.

To resolve this ambiguity, we introduce the masked prediction variant in Fig.~\ref{fig:fc_overview}c, where part of the signal must be forecast or filled in from the learned representation. This removes near-identity shortcut solutions and makes the task a stricter test of whether the model has learned reusable structure. Under this evaluation, the baseline models degrade much more strongly, whereas VN remains substantially more stable and achieves the clearest advantage. Taken together, these results suggest that VN's strength in this benchmark is not simply smoother reconstruction, but the recovery of sparse reusable latent structure that supports both reconstruction and prediction under novel function compositions. For extended results and performance analysis please see Appendix~\ref{app:fc}.

\subsection{Force Composition: Generalization in N-body Dynamics}
\label{sec:gravity}

We next turn to force composition in n-body dynamics, a substantially harder physics benchmark in which the relevant latent factors are the underlying force laws themselves (Fig.~\ref{fig:gravity_comp}a). Models are trained only on single-force systems (gravity, drag, Lorentz, or spring) and are then evaluated by predicting per-body accelerations under two simultaneous OOD shifts: multiple forces acting together and reduced body count. Thus the ID setting consists of single-force dynamics with \(n=5\) bodies, whereas the OOD setting requires reconstructing mixed-force systems with \(n=5,4,3\) bodies. This is a much stronger compositional test than interpolation alone: although instantaneous forces superpose linearly, the resulting trajectories diverge under coupled state feedback through positions and velocities, so the model must reuse learned physical mechanisms in unseen superpositions and under shifted state distributions.

\begin{figure}[h!]
    \centering
    \IfFileExists{Figures/fig4.png}{%
        \includegraphics[width=\textwidth]{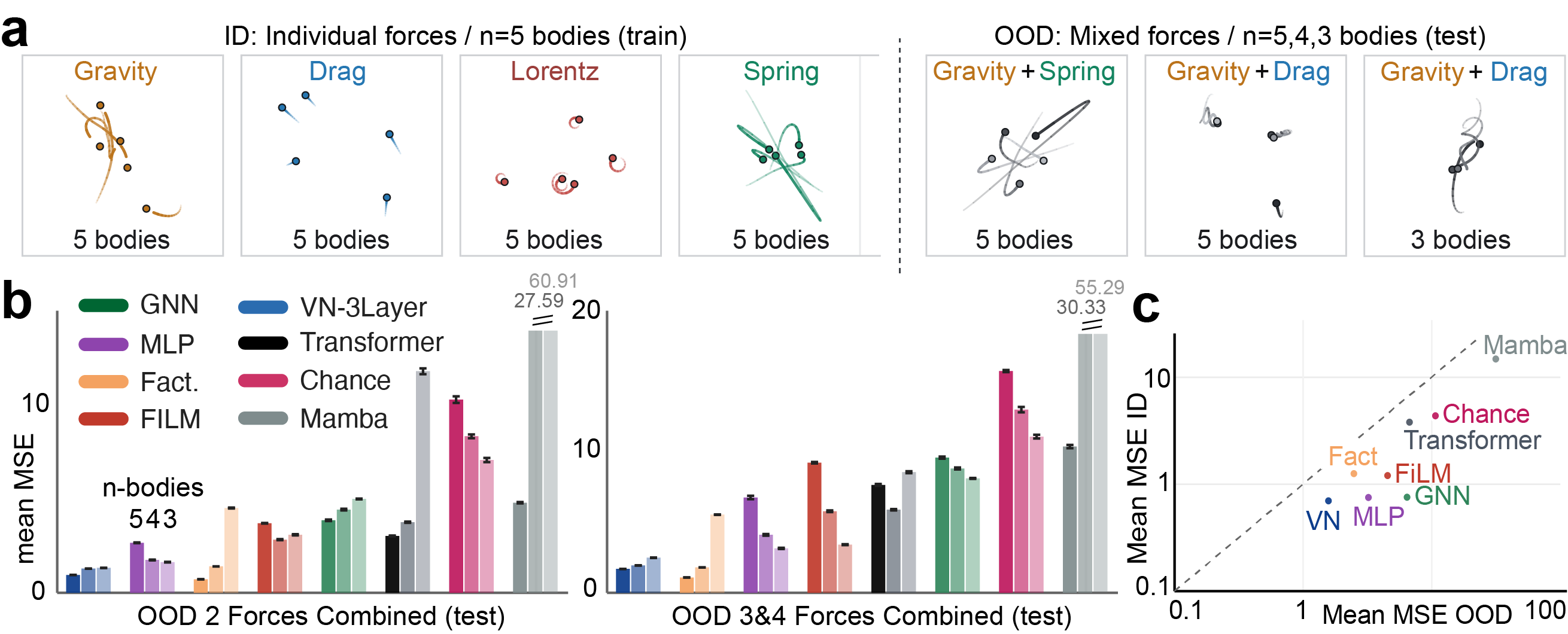}%
    }{%
        \fbox{\rule{0pt}{0.42\textheight}\rule{0.98\textwidth}{0pt}}%
    }
    \caption{
    \textbf{Force Composition: Generalization in n-body Dynamics.}
    \textbf{(a)} Task setup. Models are trained on single force fields (gravity, drag, Lorentz, spring) with \(n=5\) bodies and then evaluated out of distribution on mixed-force systems and reduced body counts (\(n=5,4,3\)).
    \textbf{(b)} Mean OOD MSE under force composition. Left: mean over all two-force combinations. Right: mean over all three- and four-force combinations. Error bars are 95\% CIs; within each model, lighter bars denote reduced body count (\(n=4,3\)). 
    \textbf{(c)} Mean OOD vs.\ mean ID performance on a log--log scale, averaged across \(n=5,4,3\). Each point is one model, with ID MSE on the y-axis and OOD MSE on the x-axis. The dashed diagonal marks equal ID and OOD performance; models that degrade under distribution shift move to the right and below this line. VN remains closest to the ideal low-error, near-diagonal regime, indicating stronger robustness and compositional generalization.
    }
    \label{fig:gravity_comp}
\end{figure}

The main quantitative result is shown in Fig.~\ref{fig:gravity_comp}b. All models perform well on the single-force ID cases, but errors rise substantially once forces are combined, and rise further as evaluation moves from two-force mixtures to the harder three- and four-force regimes. The effect becomes even stronger under body dropout, shown by the lighter \(n=4\) and \(n=3\) bars. In contrast, the 3-layer VN remains consistently low across these OOD settings and retains a clear margin over MLP, GNN, Transformer, Mamba, and FiLM baselines. We also compare against a 'Factorized' control, which uses one hand-crafted subnetwork per force and combines their fitted contributions at test time, as well as a random baseline corresponding to chance-level prediction. Even against these controls, fitting the primitive regimes in isolation is not enough: what matters is whether the learned mechanisms can still be recombined when the physical system changes. For extended results see Appendix~\ref{app:nbody_extended}.

Figure~\ref{fig:gravity_comp}c makes this interpretation explicit by comparing mean ID and mean OOD error directly on a log--log scale. Several baselines achieve low ID error yet move far to the right under compositional shift, indicating that good single-force reconstruction does not imply robust transfer to mixed-force dynamics. VN remains closest to the low-error, near-diagonal regime, showing that its advantage here is not simply stronger ID fitting, but more stable reuse of learned force-law structure under increasing compositional load. In that sense, the n-body benchmark provides the clearest evidence in our suite that explicit weight-space responsibility can support mechanism-level composition in a difficult nonlinear setting. 

\subsection{Image Composition: Generalizing to Novel MNIST Digit Compositions}
\label{sec:mnist}

To test whether the same VN capability transfers to vision, we evaluate two compositional MNIST benchmarks constructed for this study (Fig.~\ref{fig:mnist_rts}a--c). The first is a digit reconstruction benchmark in which the ID setting consists of single digits, while the OOD conditions require compositions such as Add/Subtract mixtures, Max projections over multiple digits, and structured corruptions; Appendix~\ref{app:mnist_extended} reports additional representation and convergence diagnostics. The second is a dynamic Rotation/Translation/Scaling (RTS) benchmark in which single transformations define the ID setting and OOD evaluation requires stronger and more complex transformation compositions, including larger step sizes and multiple digits transformed simultaneously (Fig.~\ref{fig:mnist_rts}b). In both cases the task is reconstruction: for standard MNIST, the model reconstructs the image itself, whereas for RTS it reconstructs the transformation-induced motion signal \(dx\). Baselines are compared against autoencoders (AEs) at matched ID performance or matched bottleneck sparsity, so any OOD differences cannot be explained by capacity or sparsity alone.

\begin{figure*}[h!]
    \centering
    \IfFileExists{Figures/fig5.png}{%
        \includegraphics[width=1\linewidth]{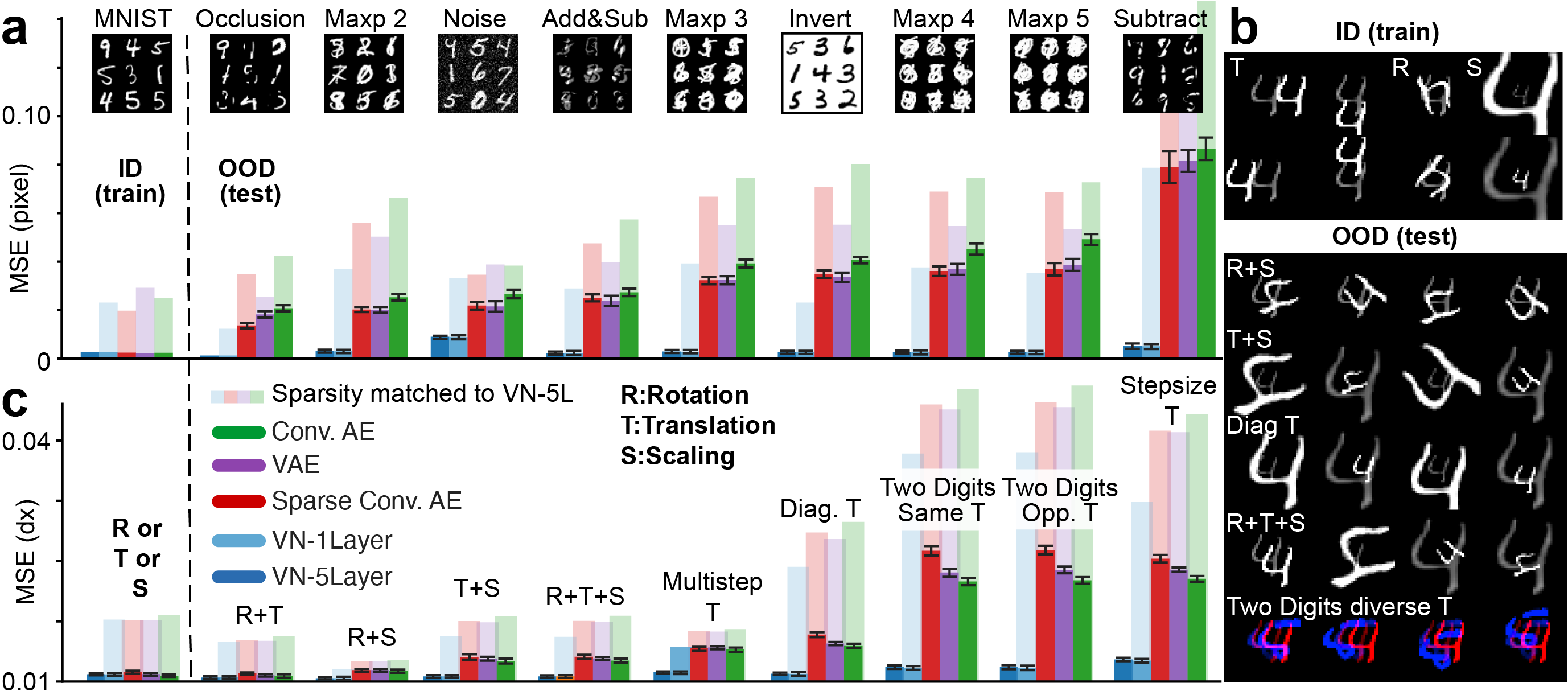}%
    }{%
        \fbox{\rule{0pt}{0.45\textheight}\rule{0.98\textwidth}{0pt}}%
    }
    \caption{\textbf{Image Composition: Generalizing to MNIST Compositions}
    \textbf{(a)} VN reconstruction error on MNIST ID and OOD composition tests.
    \textbf{(b)} Representative Rotation/Translation/Scaling (RTS) ID training samples and OOD test compositions of different motions.
    \textbf{(c)} Motion reconstruction error on ID/OOD RTS transformations; faint bars denote sparsity-matched AE variants. OOD tests combine familiar transformations outside the training distribution, and VN again achieves the strongest OOD performance while remaining competitive on ID. Error bars are 95\% CI.}
    \label{fig:mnist_rts}

\end{figure*}

The digit benchmark already shows a clear separation (Fig.~\ref{fig:mnist_rts}a). All models perform well on single-digit ID cases, but the difference appears once the test set requires composition or corruption. Under Add/Subtract mixtures, Max projections, and structured corruptions, AE reconstruction errors increase substantially, whereas the VN remains much closer to its ID performance. In the strongest OOD conditions, VNs are roughly an order of magnitude better than the corresponding AE baselines. Sparsity-matched AEs also perform worse across both ID and OOD tests, showing that the advantage is not sparsity alone but the VN's sparse weight-atom structure and reuse.

The RTS benchmark shows the same pattern for visual transformations rather than digit identity (Fig.~\ref{fig:mnist_rts}b--c). For single transformations, all models perform similarly. When the OOD setting requires larger transformation steps or simultaneous transformations of multiple digits, AE baseline errors increase steadily, while the VN degrades much more gracefully and preserves digit structure more reliably. Thus the same advantage that appears in static digit composition also extends to a dynamic visual setting in which the relevant factors are transformations rather than image content. 

Overall, the MNIST results provide the strongest visual confirmation of the central pattern seen throughout the paper. Strong performance on simple ID regimes does not imply compositional generalization, even in vision. Dense baselines perform well when interpolation is sufficient, but degrade once familiar components or transformations must be recombined in unfamiliar ways. The VN remains substantially more stable across both content and motion settings, consistent with the idea that explicit weight-space responsibility can support reusable structure across the spatial, function-composition, and physics benchmarks considered above.

\section{Discussion}
\label{sec:discussion}

While VN is often comparable to strong baselines on in-distribution cases, its advantage becomes clear in out-of-distribution settings where previously learned components must be recombined into novel compositions. In those regimes, VN frequently achieves substantially lower error, often by about an order of magnitude, and often achieves lower error than strong modern baselines such as Transformers and Mamba. The main advantage of VN therefore seems to lie less in ordinary interpolation than in preserving reusable structure as compositional load increases.

The architecture and learning rule of VN explicitly target the preservation of reusable structure under increasing superposition by making weight responsibility explicit before plasticity. VN first infers which weight atoms were responsible for the current explanation and then updates those components through atom responsibility\(\times\)error (Atomic-Hebb). This fast-recombination / slow-storage split localizes plasticity and helps reduce diffuse overlap between unrelated updates while preserving weight-atom recomposability across inputs (see Remark~\ref{remark:hebbian_vs_bp} for contrast to BP credit assignment).

The same weight-atom structure suggests several broader advantages. First, once a useful library of latent factors has been learned, new inputs can be handled by recomposition rather than by storing each factorial combination separately. This offers a natural route toward improved data efficiency, especially in the long tail, where the challenge is not simply interpolation between seen cases but reuse of familiar parts in unfamiliar combinations. In this sense, VN may scale more gracefully with the number of latent factors that must be discovered than with the number of training examples needed to cover their combinations. Explicit weight-atom responsibility may also be useful for continual learning, because it provides a direct sample-wise signal of which reusable components were important for the current explanation. A cautious extension is that some new behavior may be incorporated by refining a small subset of existing components, or where necessary adding a few new ones, rather than reworking an entire dense shared substrate. In spirit this is related to parameter-efficient schemes and their variants such as LoRA, which also restrict adaptation to a low-rank subspace, but here the relevant components are selected and later complemented by input-dependent inference rather than introduced as fixed adaptation factors \cite{Hu2022LoRA}.

The main limitation of the current VN is practical rather than conceptual: explicit weight-atom responsibility is presently obtained through a recurrent sparse settle. While this yields interpretability and modularity, in its current form the VN inference is significantly slower than a single feedforward application and currently limits efficiency. This also points to the most immediate path forward. Prior work has shown that iterative sparse-coding inference can be approximated by learned feedforward updates \cite{gregor2010learning}, suggesting that learned support initializers or hybrid fast/slow inference could substantially shorten the VN settle once the sparse support has stabilized \cite{graves2016adaptive, marino2018iterative, monga2019algorithm}. 

Overall, the contribution of the VN is not only in the current benchmarks, but in the design principle it makes explicit: generalization should rely on reusable weight structure rather than on continually fitting new cases into one monolithic parameter field. In this sense, VN points beyond standard backpropagation toward a regime in which computation first infers sparse responsibility over reusable weight components and learning then acts selectively on that inferred structure. Conceptually, VN shifts compositionality from state routing to responsibility in weight space, which may matter most in real-world settings where familiar factors must be reused in unfamiliar combinations.

\section*{Acknowledgements}
We thank Sander de Haan, Giuseppe Chindemi and Yassine Taoudi-Benchekroun for comments on the manuscript. This work was supported by the Swiss National Science Foundation (189251, 10003772, B.F.G.) and ETH Zurich project funding (ETH-20 19-01, B.F.G.).

\section*{Author Contributions}
B.F.G. conceived the Vector Network architecture, Atomic Hebb rule and developed the first multilayer implementation. N.P. and B.F.G. jointly developed and wrote the present manuscript and the OOD experimental settings. N.P. carried out the mathematical proofs, the function-composition, spatial-decoding (blob), and n-body experiments, and contributed substantially to the experimental analysis and presentation. B.F.G. and N.P. jointly refined the implementation and the final experimental framework.

\bibliographystyle{plainnat}
\bibliography{references}

\clearpage
\appendix

\section*{Appendix}
\section*{Content List}
\begin{itemize}[leftmargin=1.5em]
  \item Predictive Coding vs.\ Vector Networks (Sec.~\ref{app:pc_vs_vn})
  \item VN vs.\ Transformers: Sparse Operator Inference vs.\ State Routing (Sec.~\ref{app:vn_vs_transformers})
  \item Fast Inference of Weight Atom Coefficients in Deep Vector Networks (Sec.~\ref{app:fast_inference})
  \item Slow Local Dictionary Updates: The Atomic-Hebb rule (Sec.~\ref{app:atomic_hebb})
  \item Theoretical Foundations and Proofs (Sec.~\ref{app:theory})
  \item Spatial Decoding: Extended Results (Sec.~\ref{app:fiete_extended})
  \item Function Composition: Extended Results (Sec.~\ref{app:fc})
  \item n-Body Dynamics: Extended Results (Sec.~\ref{app:nbody_extended})
  \item MNIST Composition: Extended Results (Sec.~\ref{app:mnist_extended})
  \item Reproducibility: Implementation Details (Sec.~\ref{app:impl})
  \item Hardware and Storage Requirements (Sec.~\ref{app:hardware})
\end{itemize}

\section{Predictive Coding vs.\ Vector Networks}
\label{app:pc_vs_vn}

This section clarifies the precise relation between predictive coding (PC) and Vector Networks (VN). The goal is to separate the shared optimization structure from the different architectural role played by the inferred variables. 

The comparison between predictive coding (PC) and Vector Networks (VN) is easiest to understand by separating \emph{how} inference is performed from \emph{what role} the inferred variables play. In both cases, fast variables are inferred under fixed learned parameters by minimizing a local objective or energy \cite{rao1999predictive, friston2010free, whittington2017approximation, oliviers2026bidirectional}. The key difference is architectural: in PC, the inferred variables are the dense layer activities themselves, whereas in VN the inferred variables reparameterize the transformation between dense layers.

In standard sparse or predictive coding, the mapping has the form
\begin{equation}
y = W x.
\end{equation}
Writing \(W=[w_1,\dots,w_d]\) in terms of its columns gives
\begin{equation}
y = \sum_j x_j\, w_j.
\end{equation}
Thus, changing the inferred activity \(x_j\) gates the contribution of column \(w_j\). Inference therefore selects which input directions or features are active, but the output-side matrix \(W\) remains fixed. Put differently, PC infers the dense hidden state and only thereby determines how the fixed matrix is used; the inferred object is the layer activity itself, not an explicit parameterization of the transformation between layers.

In VN, by contrast, the effective operator is synthesized from reusable rank-1 atoms:
\begin{equation}
W^\star(x) = \sum_i g_i(x)\,u_i s_i^\top,
\end{equation}
so the inter-layer transformation is first reparameterized by the sparse coefficient vector \(g\), and inference then minimizes the layer energy with respect to that code. The coefficients \(g_i\) can be viewed as latent activities, but they are more than that: each \(g_i\) scales an entire rank-1 operator \(u_i s_i^\top\), so the inferred code defines the sample-specific operator description between the two dense layers. Once \(g_l^\star\) is inferred, the dense upward activity \(h_l\) (identified with \(x_{l+1}\)) is produced directly from that settled code rather than being itself inferred as an independent dense hidden state. In this sense, VN does not merely select latent features under a fixed map, but selects and composes input--output mappings, i.e.\ operators.

The distinction can be summarized as follows:
\begin{equation}
\text{PC: } y=\sum_j x_j\,w_j
\qquad
(\text{column / feature selection}),
\end{equation}
\begin{equation}
\text{VN: } W^\star(x)=\sum_i g_i\,u_i s_i^\top
\qquad
(\text{operator selection}).
\end{equation}
Thus both frameworks use iterative latent inference, but the inferred variables control different objects: PC infers dense states within a fixed mapping, whereas VN infers a sparse code that assembles the sample-specific operator itself and then yields the dense next-layer activity.

The same shift can be understood more concretely from a bidirectional predictive-coding (bPC) stack. For a mapping \(x_l \xrightarrow{W_l} x_{l+1}\), both neighboring activities \(x_l\) and \(x_{l+1}\) are inferred in bPC. The effective use of the fixed matrix \(W_l\) is therefore jointly shaped from below by the inferred lower-layer state and from above by the inferred upper-layer state. In VN, by contrast, the dense intermediate activities are fixed-pass communication channels rather than inference variables; the model alternates dense communication layers \(x_l\) with sparse inference layers \(g_l\), so information flows as \(x_l \rightarrow g_l \rightarrow x_{l+1}\). This removes the shared-state coupling of bPC and assigns the layer-local transformation to one explicit sparse code.

Second, the fixed dense matrix \(W_l\) is replaced by a layer-local dictionary of rank-1 operators \(u_{l,i}s_{l,i}^\top\), and the sample-specific transformation is assembled as
\begin{equation}
W_l^\star(x_l)=\sum_i g_{l,i}(x_l)\,u_{l,i}s_{l,i}^\top.
\end{equation}
Inference therefore no longer selects only neural activities under fixed weights, but selects and composes reusable operators. Crucially, this operator selection is explicitly sparse: the coefficients \(g_l\) are inferred under an \(\ell_1\) penalty, and at the top layer may additionally be subject to a hard top-\(k\) budget. Only a small competitive subset of atoms therefore remains active, and a single inferred coefficient vector \(g_l\) defines the operator description between the two dense layers, rather than that transformation emerging indirectly from the interaction of two inferred neighboring states. Third, the learning rule becomes weight atom local (not neuron local):
\begin{equation}
\Delta S_l \propto r_l^x g_l^\top,
\qquad
\Delta U_l \propto r_l^h g_l^\top.
\end{equation}
In bPC, local plasticity typically takes a pre\(\times\)error or pre\(\times\)post\(\times\)error form, and the inferred activities are usually not constrained to select a sparse competitive subset of reusable operators, so updates remain distributed across adjacent weight matrices and are neuron- or pathway-local rather than weight-atom local. This distributed coupling leads to stronger weight entanglement across layers, because weight responsibility remains spread over neighboring inferred states and shared dense matrices. In VN, the inferred coefficients \(g_l\) update only the atoms of their own layer, so credit assignment becomes weight-atom local to the transformation itself and more directly limits such cross-layer entanglement. In this sense, the key conceptual shift is
\begin{equation}
\text{bPC: inference on states}
\qquad\longrightarrow\qquad
\text{VN: inference on transformations / operators.}
\end{equation}

Bidirectional predictive coding therefore remains a state-inference framework: bottom-up and top-down activities are inferred under fixed forward and backward weights, and plasticity stays neuron- or pathway-local through pre\(\times\)error or pre\(\times\)post\(\times\)error interactions. VN keeps the iterative inference structure but shifts both computation and learning onto sparse reusable operator atoms: the inferred coefficient \(g_l\) selects a competitive subset of rank-1 transformations and the same coefficient gates their weight-atom-local update. This is the sense in which VN turns compositionality into explicit responsibility over reusable weight structure.

\section{VN vs.\ Transformers: Sparse Operator Inference vs.\ State Routing}
\label{app:vn_vs_transformers}

This section compares VN to transformer attention at the level of the adaptive variables each model uses during inference. The central distinction is between feedforward state routing in transformers and sparse operator inference in VN.

The comparison to transformers is useful because both architectures are input-adaptive, but the adaptive variables act on different objects. In a transformer, attention weights route information among token states:
\begin{equation}
\vect h_i^{\mathrm{attn}}=\sum_{j}\alpha_{ij}\,\vect V_j,
\qquad
\alpha_{ij}=\mathrm{softmax}\!\left(\frac{\vect q_i^{\!\top}\vect k_j}{\sqrt{d_k}}\right).
\end{equation}
The learned projection matrices \(W_Q,W_K,W_V,W_O\) and the MLP weights are fixed for a given model during inference; the input-dependent variables \(\alpha_{ij}\) determine which value states are read and mixed. Thus, standard attention provides a powerful form of \emph{adaptive state routing}. Unlike VN, these adaptive coefficients are generated feedforward rather than inferred by energy minimization, and they are not subject to an explicit sparsity pressure or competitive selection among reusable operator atoms.

\paragraph{Adaptive operator composition in VN.}
In VN, the input-dependent variables are sparse weight-atom coefficients. After the fast settle, the coefficients \(g_l^\star(x_l)\) select rank-1 atoms and compose a sample-specific operator,
\begin{equation}
\mat W_l^\star(x_l)
=\sum_k g_{l,k}^\star(x_l)\,u_{l,k}s_{l,k}^{\top}.
\end{equation}
The adaptive variable therefore specifies \emph{which reusable operators are composed by the inferred code}, not only which states are read. The same inferred coefficients later gate the slow local updates on those active atoms, so computation and plasticity are organized around the same atom-level responsibility pattern.

\paragraph{Low rank plays different roles.}
\emph{VN.} Each weight atom is rank-1 in feature space, so the induced operator lies in
\begin{equation}
\mat W_l^\star(x_l)\in\mathrm{span}\{u_{l,k}s_{l,k}^{\top}\}_{k=1}^{K_l}.
\end{equation}
Low rank constrains the operator atoms exposed after inference; sparse gains \(g_{l,k}^\star(x_l)\) compose these atoms into a sample-specific operator description.

\noindent
\emph{Transformer.} For one attention head, pre-softmax scores form a bilinear token-token matrix,
\begin{equation}
\mat S=\frac{\mat Q\mat K^{\!\top}}{\sqrt{d_k}}
=\frac{1}{\sqrt{d_k}}\sum_{m=1}^{d_k}\mat Q_{(:,m)}\,\mat K_{(:,m)}^{\!\top}.
\end{equation}
This low-rank structure shapes routing capacity over tokens. Multi-head attention combines several such routing patterns, but the learned projection matrices themselves are not selected as a sparse superposition of reusable weight atoms.

\paragraph{Comparison summary.}
\begin{center}
\begin{tabular}{p{0.24\linewidth} p{0.34\linewidth} p{0.34\linewidth}}
\hline
\textbf{Aspect} & \textbf{VN (operators inferred from sparse atoms)} & \textbf{Transformer (operators generated by attention)} \\
\hline
Adaptive variable &
Sparse coefficients \(g_l^\star(x_l)\) &
Attention weights \(\alpha_{ij}\) \\
What it acts on &
Reusable rank-1 operator atoms \(u_{l,k}s_{l,k}^{\top}\) &
Token/value states under fixed feature projections \\
How it is obtained &
Iterative sparse inference under an explicit energy and sparsity penalty &
Feedforward softmax attention \\
Does it gate learning? &
Yes: the same coefficients later gate slow local updates on active atoms &
No: attention weights route computation only \\
\hline
\end{tabular}
\end{center}

\paragraph{Competition during inference.}
Transformer attention weights are generated feedforward and normalized by softmax, so they redistribute mass over available states but do not arise from an explicit sparse energy minimization over reusable operator atoms. In VN, by contrast, the coefficients \(g_l^\star\) are obtained by minimizing a sparsity-regularized energy, so weight atoms compete directly for responsibility under an explicit code budget.

\paragraph{Conceptual summary.}
The core distinction is that transformers generate input-dependent state-mixing operators feedforward, whereas VN infers a sparse, competitive composition of reusable rank-1 operator atoms. In VN, the converged coefficients \(g_l^\star(x_l)\) select one sample-specific operator from the span
\begin{equation}
\mathcal W_l=\Big\{\sum_k g_k\,u_{l,k}s_{l,k}^{\top}\;\Big|\;g\in\R^{K_l}\Big\}.
\end{equation}
The same coefficients then gate the corresponding slow local updates. Transformer attention, by contrast, adapts computation by redistributing activity over states under fixed feature-space projections, without a matching responsibility-gated operator update. Low-rank structure therefore plays different roles: it governs operator composition and responsibility-gated plasticity in VN, and state routing in transformers.

\section{Fast Inference of Weight Atom Coefficients in Deep Vector Networks}
\label{app:fast_inference}
This section explains how a deep VN settles through hierarchical recurrent sparse inference. The key point is that layers are not fully optimized one at a time; instead, sparse coefficient updates are interleaved across the hierarchy through repeated upward and downward sweeps until the full stack reaches a consistent fixed point.

For layer \(l\), the energy is
\begin{equation}
E_l(g_l\mid x_l,h_{l,\mathrm{target}})
=\tfrac12\|x_l-S_l g_l\|_2^2+\lambda_l\|g_l\|_1+\tfrac12\|h_{l,\mathrm{target}}-U_l g_l\|_2^2.
\end{equation}
The smooth part \(f_l(z)=\tfrac12\|x_l-S_l z\|^2+\tfrac12\|h_{l,\mathrm{target}}-U_l z\|^2\) has gradient
\begin{equation}
\nabla f_l(z)=S_l^\top(S_l z-x_l)+U_l^\top(U_l z-h_{l,\mathrm{target}}),
\end{equation}
where the \(U\) term is omitted when \(h_{l,\mathrm{target}}\) is not defined (top layer). A proximal-gradient update is then
\begin{equation}
z \leftarrow \mathrm{prox}_{\eta\lambda_l\|\cdot\|_1}\!\left(z-\eta\,\nabla f_l(z)\right)
=\mathrm{soft}\!\left(z-\eta\,\nabla f_l(z),\ \eta\lambda_l\right).
\end{equation}
Choosing \(\eta \le 1/L\) with \(L=\|\;S_l^\top S_l+U_l^\top U_l\;\|_2\) yields a descent step for the coupled update. In practice we use \(\eta_l \approx 1/L\); during the first sweep, before top-down targets have been refreshed, we optionally use a bottom-up-only step size \(\eta_{s,l}\approx 1/\|S_l^\top S_l\|_2\). When a fixed code budget is required, the proximal update may be followed by a hard Top-\(k\) projection, as in Eq.~\eqref{eq:vn_update_hard}. Thus each layer update is an ISTA/FISTA-style sparse inference step on \(g_l\), not a full layerwise convergence before the next layer is touched.

The settle therefore proceeds as a distributed hierarchical recurrent process. Starting at the bottom layer, one sparse update step is taken for \(g_1\); the resulting dense upward message \(h_1=\phi_1(U_1g_1)\), identified with the next-layer input \(x_2\), is then propagated upward so that layer \(2\) can take one sparse update step, and so on until the top layer is reached. The process then reverses direction: top-down targets are refreshed, \(h_{l,\mathrm{target}}\leftarrow S_{l+1}g_{l+1}\), and the same kind of sparse update is applied during a downward sweep. One complete upward and downward traversal constitutes a single inference sweep, and the network settles through many such recurrent sweeps rather than by solving each layer in isolation. Operationally, this resembles a recurrent encoder--decoder settle: activity first propagates upward toward progressively sparser codes, then reconstructed top-down targets are sent back down to refine the lower-layer coefficients. The analogy is only heuristic, however, because both passes are part of one coupled sparse inference dynamics rather than separate feedforward encoder and decoder networks.

Different schedules for this recurrent inference are possible. The simplest schedule, and the one closest to the implementation described in the main text, updates each layer once per upward pass and once per downward pass. More asynchronous variants are also possible: once a lower layer has been initialized, it can continue receiving recurrent ISTA/FISTA updates while higher layers are still being initialized, so that by the time the top layer is reached, lower layers may already have undergone several refinement steps. More generally, after an initial feedforward initialization, one can transition into a fully recurrent regime in which all layers are updated continuously at every inference time step. In all cases, the best way to understand the VN settle is as a coupled hierarchical recurrent inference dynamics in which sparse operator coefficients across all layers gradually co-adapt until the hierarchy reaches a consistent fixed point.

Other first-order variants (including accelerated proximal-gradient updates) are equally applicable as heuristic solvers with energy tracking and optional accept/reject safeguards \cite{beck2009fast}. The \(L_1\) term provides the main sparsity pressure that encourages weight-atom selection. In our experiments, hidden-layer sparsity is controlled by the soft-thresholding update itself, while any strict hard support cap is reserved for the top layer. We usually initialize \(g_l=0\) and \(h_{l,\mathrm{target}}=0\), but the implementation also supports warm starts, resumed code states, and optional nonzero random initialization.

\section{Slow Local Dictionary Updates: The Atomic-Hebb Rule}
\label{app:atomic_hebb}
This section derives the Atomic-Hebb local dictionary update, i.e.\ the residual\(\times\)atom-coefficient rule used after the fast settle. Here \(g_l^\star\) denotes the converged sparse code after fast inference has settled. The key point is that this plasticity is weight-atom local: each reusable operator atom is updated only through its own inferred coefficient and the residual available at that layer.
With \(g_l^\star\) fixed after the fast settle, the reconstruction term has gradient
\begin{equation}
\nabla_{S_l}\tfrac12\|x_l-S_l g_l^\star\|_2^2=-(x_l-S_l g_l^\star)\,(g_l^\star)^\top=-\,r_l^x (g_l^\star)^\top,
\end{equation}
and the top-down term has gradient (when defined)
\begin{equation}
\nabla_{U_l}\tfrac12\|h_{l,\mathrm{target}}-U_l g_l^\star\|_2^2=-(h_{l,\mathrm{target}}-U_l g_l^\star)\,(g_l^\star)^\top=-\,r_l^h (g_l^\star)^\top.
\end{equation}
Taking a direct gradient step on \((S_l,U_l)\) yields the local residual\(\times\)atom-coefficient updates used in Sec.~\ref{sec:slow}. The subtle but important point is that Atomic-Hebb is not fully independent per atom in the strict sense, because the residuals
\begin{equation}
r_l^x = x_l-\sum_j g^\star_{l,j}s_{l,j},
\qquad
r_l^h = h_{l,\mathrm{target}}-\sum_j g^\star_{l,j}u_{l,j}
\end{equation}
depend on the full sparse reconstruction and therefore on all active atoms. The update is therefore not
\begin{equation}
\Delta A^{(l,i)}\sim \text{local information of atom } i
\end{equation}
in isolation. The key point instead is that credit assignment is localized at the level of \emph{operator responsibility}. The coefficient \(g^\star_{l,i}\) determines which atom participated and how strongly it contributed, while the residual measures how much unexplained error remains after the joint sparse explanation. Column-wise, the rule can therefore be read as
\begin{equation}
\Delta s_{l,i}\propto g^\star_{l,i}\,r_l^x,
\qquad
\Delta u_{l,i}\propto g^\star_{l,i}\,r_l^h.
\end{equation}
Every active atom sees the same layer-local residual field, but scaled by its own sparse responsibility.

This is fundamentally different from backpropagation or predictive coding, where the update of a weight depends on correlations between activities of different layers, schematically
\begin{equation}
\Delta W_l \sim \delta_{l+1}x_l^\top,
\end{equation}
where \(\delta_{l+1}\) denotes the output error or upstream error signal carried by the next layer's activities. In VN, the update acts only on atoms of one layer, and each atom is weighted by its own inferred coefficient \(g^\star_{l,i}\). The rule is therefore not ``fully atom-independent,'' because the shared residual still depends on the joint sparse explanation, but credit assignment is localized at the level of operator responsibility. This is closely analogous to sparse coding itself: atoms compete to explain the input, the residual reflects the remaining unexplained signal, and only active atoms receive credit.

The important distinction is therefore not that each atom learns in total isolation, but that the interaction is within-layer, reconstruction-based, sparse, and competitive, rather than cross-layer, state-sharing, or dependent on gradient transport through multiple transformations. That is why the rule is still meaningfully layer-local and weight-atom-local even though the residual is shared. In the implementation, the same residual\(\times\)atom-coefficient terms are used as local gradients for Adam, followed by per-column renormalization (and optional DC removal in pixel space).

\paragraph{Renormalization and DC removal.}
After each update, columns of \(S_l\) and \(U_l\) are renormalized to unit norm. For pixel-space dictionaries (e.g. \(16{\times}16\) patches) we optionally remove the DC component (the mean / zero-frequency content) of each \(S_l\) column before renormalization. These procedural constraints play the role of gain/variance control and keep sparsity statistics well-defined.

\section{Theoretical Foundations and Proofs}
\label{app:theory}

This section collects the main formal statements underlying the Vector Network architecture. It covers compositional reconstruction, the Atomic-Hebbian update, convergence of fast inference, capacity bounds, and the role of top-down coupling.

We provide formal derivations and discussions for five core elements of the Vector Network architecture: (1)~why linear sparse reconstruction algebraically guarantees compositional generalization under superposition, (2)~why the local Atomic-Hebbian updates are exact gradients of the equilibrium energy, (3)~convergence guarantees of the fast inference dynamics, (4)~the effective capacity bounds induced by sparse weight-atom inference, and (5)~the top-down coupling viewed as predictive coding. Where relevant we also note gaps between the idealized theory and the practical implementation.

\subsection{Compositional Generalization via Linear Sparse Reconstruction}
\label{app:theory:composition}

The core empirical result motivating this section is that VNs generalize zero-shot to novel combinations of mechanisms, such as multi-bump superpositions or mixed-force OOD cases, whereas dense and factorized models fail. We formalize this in two parts: first, the algebraic preservation of superposition through the linear decoder; second, the conditions under which sparse inference recovers the correct composition.


\paragraph{Part 1: Superposition preservation.}
The VN reconstructs its output as a linear combination of dictionary atoms:
$\hat{\mathbf{y}} = \sum_{i=1}^K g_i \mathbf{s}_i = \mat{S}\mathbf{g}$.
This reconstruction is \emph{linear} in the code $\mathbf{g}$, which immediately yields:

\begin{proposition}[Superposition Preservation]
\label{prop:superposition}
Let $\mathbf{g}_1, \mathbf{g}_2 \in \R^K$ be sparse codes that produce outputs $\hat{\mathbf{y}}_1 = \mat{S}\mathbf{g}_1$ and $\hat{\mathbf{y}}_2 = \mat{S}\mathbf{g}_2$ respectively. Then the code $\mathbf{g}_1 + \mathbf{g}_2$ produces the exact superposition:
\begin{equation}
    \mat{S}(\mathbf{g}_1 + \mathbf{g}_2) = \mat{S}\mathbf{g}_1 + \mat{S}\mathbf{g}_2 = \hat{\mathbf{y}}_1 + \hat{\mathbf{y}}_2
\end{equation}
\end{proposition}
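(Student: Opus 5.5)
The statement follows directly from linearity of matrix--vector multiplication. The plan is to treat the decoder as the linear map $T_{\mat{S}}:\R^K\to\R^{d}$, $\mathbf{g}\mapsto \mat{S}\mathbf{g}$, and verify additivity. Sparsity of $\mathbf{g}_1,\mathbf{g}_2$ plays no role in the identity itself, so the argument holds for arbitrary codes.

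First, write $\mat{S}\mathbf{g}=\sum_{i=1}^K g_i\mathbf{s}_i$ as a column expansion, exactly as in the reconstruction formula $\hat{\mathbf{y}}=\sum_i g_i\mathbf{s}_i$ stated above. Second, apply this expansion to $\mathbf{g}_1+\mathbf{g}_2$, whose $i$-th entry is $g_{1,i}+g_{2,i}$. Distributivity of scalar multiplication over addition in $\R^d$ gives $(g_{1,i}+g_{2,i})\mathbf{s}_i=g_{1,i}\mathbf{s}_i+g_{2,i}\mathbf{s}_i$. Third, regroup the finite sum:
\[
\sum_{i=1}^K (g_{1,i}+g_{2,i})\mathbf{s}_i=\sum_{i=1}^K g_{1,i}\mathbf{s}_i+\sum_{i=1}^K g_{2,i}\mathbf{s}_i=\mat{S}\mathbf{g}_1+\mat{S}\mathbf{g}_2 .
\]
Substituting the definitions $\hat{\mathbf{y}}_j=\mat{S}\mathbf{g}_j$ then yields the claim.

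No step is a real obstacle. The only point that needs care is scope. The identity says that the \emph{summed code} decodes to the summed output. It does not say that the sparse inference of Section~\ref{sec:fast} applied to the input $\hat{\mathbf{y}}_1+\hat{\mathbf{y}}_2$ returns $\mathbf{g}_1+\mathbf{g}_2$. That question of recovery concerns the nonlinear soft-thresholding and top-$k$ dynamics, and it is deferred to Part~2. I would close by noting that the support of $\mathbf{g}_1+\mathbf{g}_2$ lies in the union of the two supports, so the composed code is at most $k_1+k_2$-sparse; this remark is useful when Part~2 invokes sparse-recovery conditions.
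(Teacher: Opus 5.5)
Your proof is correct and follows the same approach as the paper, which simply cites linearity of matrix multiplication; your column expansion just writes that one-line argument out in full. Your scope remark also matches the paper: decoding the summed code is separate from recovering it by sparse inference, which the paper handles in Part~2 (Theorem~\ref{thm:composition}).
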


\begin{proof}
Immediate from linearity of matrix multiplication: $\mat{S}(\mathbf{g}_1 + \mathbf{g}_2) = \mat{S}\mathbf{g}_1 + \mat{S}\mathbf{g}_2$.
\end{proof}

The key insight is that any nonlinear decoder, including standard CNN decoders and factorized outer-product architectures, breaks this identity. For a nonlinear decoder $D$, the output $D(\mathbf{g}_1 + \mathbf{g}_2) \neq D(\mathbf{g}_1) + D(\mathbf{g}_2)$ in general, and compositional generalization becomes architecture-dependent rather than guaranteed.

For the outer-product factorized decoder specifically, the rank-1 output $f(\mathbf{x})\,g(\mathbf{y})$ applied to a superimposed input $(\mathbf{x}_1{+}\mathbf{x}_2, \mathbf{y}_1{+}\mathbf{y}_2)$ necessarily produces $N^2$ cross-terms instead of $N$ correct terms, as demonstrated in Section~\ref{sec:blob}.


\paragraph{Part 2: Sparse recovery of novel compositions.}
Proposition~\ref{prop:superposition} guarantees that \emph{if} the correct composite code $\mathbf{g}_1 + \mathbf{g}_2$ is found, the output is exact. The remaining question is whether inference can find this code when the composition was never seen during training.

\textbf{Setup.} Let the true generative process be $\mathbf{y} = \sum_{j=1}^C c_j \mat{W}^{(j)}_{\text{true}} \mathbf{x}$ where $\mathbf{c} \in \{0,1\}^C$ indicates active mechanisms. Assume the dictionary contains atoms $\mat{A}^{(i)} \approx \mat{W}^{(j)}_{\text{true}}$ for each mechanism $j$. Define the dynamic dictionary $\mat{\Phi}(\mathbf{x}) \in \R^{m \times K}$ with columns $\mat{\Phi}_i(\mathbf{x}) = \mat{A}^{(i)}\mathbf{x}$. Theorem~\ref{thm:composition} below analyses recovery for this idealized output-side objective; it provides a guide to when sparse recovery succeeds rather than a literal analysis of VN's implemented layer-local energy ($\frac{1}{2}\|x - Sg\|^2 + \frac{1}{2}\|h_\text{target} - Ug\|^2 + \lambda\|g\|_1$, Eq.~\eqref{eq:layer_energy}).

\begin{theorem}[Exact Compositional Recovery]
\label{thm:composition}
Let $\mu(\mat{\Phi}(\mathbf{x}))$ denote the mutual coherence: $\mu = \max_{i \neq j} \frac{|\mat{\Phi}_i^\top \mat{\Phi}_j|}{\|\mat{\Phi}_i\|\|\mat{\Phi}_j\|}$. Assume (a) the dictionary atoms approximate the true mechanisms with bounded residual error $\mathbf{e}$, and (b) $\lambda$ is chosen with $\lambda \gtrsim \|\mathbf{\Phi}^\top \mathbf{e}\|_\infty$. If the number of active OOD mechanisms $k_{\text{OOD}} = \|\mathbf{c}_{\text{OOD}}\|_0$ satisfies
\begin{equation}
    k_{\text{OOD}} < \frac{1}{2} \left( 1 + \frac{1}{\mu(\mat{\Phi}(\mathbf{x}))} \right),
\end{equation}
then the $L_1$-regularized inference
$\mathbf{g}^\star = \arg\min_{\mathbf{g}} \frac{1}{2}\|\mathbf{y} - \mat{\Phi}(\mathbf{x})\mathbf{g}\|_2^2 + \lambda\|\mathbf{g}\|_1$
recovers the true sparse support of $\mathbf{c}_{\text{OOD}}$.
\end{theorem}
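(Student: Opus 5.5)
Note first that the recovery condition \(k_{\text{OOD}} < \tfrac12(1+1/\mu)\) is Donoho--Elad/Tropp's mutual-coherence bound. So I would not reprove sparse recovery from scratch. Instead I would reduce Theorem~\ref{thm:composition} to Tropp's ``just relax'' / exact recovery theorem for the Lasso, applied to the dynamic dictionary \(\mat{\Phi}(\mathbf{x})\).

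\textbf{Normalization and model.} Rescale the columns \(\mat{\Phi}_i(\mathbf{x})\) to unit norm, absorbing the norms into the code and into \(\lambda\). The coherence \(\mu\) is invariant under this rescaling. Next, write the observation as \(\mathbf{y} = \mat{\Phi}(\mathbf{x})\mathbf{c}_{\text{OOD}} + \mathbf{e}\). Here assumption (a) lets us identify \(\mat{W}^{(j)}_{\text{true}}\mathbf{x}\) with \(\mat{\Phi}_{i(j)}(\mathbf{x})\) up to a residual, and all such residuals are collected into \(\mathbf{e}\). Call the true support \(T\), with \(|T| = k_{\text{OOD}}\).

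\textbf{Consequences of the coherence condition on \(T\).} By Gershgorin, the Gram matrix \(\mat{\Phi}_T^\top\mat{\Phi}_T\) has eigenvalues in \([1-(k-1)\mu,\,1+(k-1)\mu]\). Under \(k < \tfrac12(1+1/\mu)\) we have \((k-1)\mu < 1\), so the Gram matrix is invertible. Tropp's bound then gives the exact recovery coefficient
\[
\max_{i\notin T}\|\mat{\Phi}_T^{+}\mat{\Phi}_i\|_1 \le \frac{k\mu}{1-(k-1)\mu} < 1,
\]
and the inequality \(<1\) is exactly equivalent to \(2k-1 < 1/\mu\), which is the hypothesis of the theorem.

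\textbf{Primal--dual witness.} Solve the Lasso restricted to columns in \(T\). Then check strict dual feasibility off \(T\), i.e.\ \(|\mat{\Phi}_i^\top(\mathbf{y}-\mat{\Phi}_T\mathbf{g}_T)| < \lambda\) for \(i \notin T\). The off-support correlation splits into two parts:
\begin{itemize}
\item a term bounded by \(\mathrm{ERC}\cdot\lambda\) through the subgradient on \(T\);
\item a noise term \(\mat{\Phi}_i^\top(I-P_T)\mathbf{e}\), controlled by \(\|\mat{\Phi}^\top\mathbf{e}\|_\infty\).
\end{itemize}
Assumption (b), \(\lambda \gtrsim \|\mat{\Phi}^\top\mathbf{e}\|_\infty\) with a constant of order \(1/(1-\mathrm{ERC})\), makes the total strictly below \(\lambda\). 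This shows \(\operatorname{supp}\mathbf{g}^\star \subseteq T\). Uniqueness of the minimizer follows from injectivity of \(\mat{\Phi}_T\).

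\textbf{Main obstacle: the reverse inclusion.} Showing that no true atom is dropped is the hardest step, since the Lasso shrinks coefficients. I would use the explicit formula \(\mathbf{g}_T = \mathbf{c}_T + \mat{\Phi}_T^{+}\mathbf{e} - \lambda(\mat{\Phi}_T^\top\mat{\Phi}_T)^{-1}\mathrm{sign}\). Bound \(\|(\mat{\Phi}_T^\top\mat{\Phi}_T)^{-1}\|_{\infty\to\infty} \le 1/(1-(k-1)\mu)\). Since \(\mathbf{c}\in\{0,1\}^C\), the active entries equal \(1\) (times column norms), so it suffices to have \(\lambda\) and \(\mathbf{e}\) small relative to \(1-(k-1)\mu\). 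This is implicitly the ``bounded residual'' in (a) together with ``\(\gtrsim\)'' rather than ``\(\gg\)'' in (b). I would state this minimum-coefficient condition explicitly as the interpretation of (a)--(b), since the theorem is phrased heuristically.
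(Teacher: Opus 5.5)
\textbf{Overall route.} You follow the same route as the paper. Its proof simply invokes the Donoho--Elad/Tropp coherence exact-recovery condition and asks $\lambda$ to dominate the residual. Your Gershgorin/ERC computation and the primal--dual witness (with noise constant of order $(1+\mathrm{ERC})/(1-\mathrm{ERC})$) are correct. So is your observation that equality of supports needs an extra minimum-coefficient condition. That last point is a real omission in the paper's one-line proof: (b) only bounds $\lambda$ from below, and for $\lambda\ge\|\mat{\Phi}^\top\mathbf{y}\|_\infty$ the Lasso returns $\mathbf{g}^\star=0$.

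\textbf{The gap: normalization.} Set $d_i=\|\mat{\Phi}_i(\mathbf{x})\|$ and $\tilde g_i=d_i g_i$. Then the penalty $\lambda\|\mathbf{g}\|_1$ becomes the \emph{weighted} penalty $\sum_i(\lambda/d_i)|\tilde g_i|$. The norms can be absorbed into a single $\lambda$ only if all $d_i$ coincide. Here $d_i=|s_i^\top\mathbf{x}|$ varies with atom and input; the scalars cancel in $\mu$, as Remark~\ref{remark:conditions}(ii) notes, but not in the Lasso. If you run your witness on the raw dictionary, strict dual feasibility needs $\max_{i\notin T}\|\mat{\Phi}_T^{+}\mat{\Phi}_i\|_1<1$. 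Coherence only gives $\|\mat{\Phi}_T^{+}\mat{\Phi}_i\|_1\le \frac{d_i}{\min_{j\in T}d_j}\cdot\frac{k\mu}{1-(k-1)\mu}$.

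\textbf{Counterexample.} This breaks the conclusion, not just the proof. In $\R^2$ take $\mat{\Phi}_1=(1,0)^\top$ and $\mat{\Phi}_{2,3}=M(\tfrac12,\pm\tfrac{\sqrt3}{2})^\top$. This is realizable with unit $u_i,s_i$ by choosing $\mathbf{x}$ with $s_1^\top\mathbf{x}=1$ and $s_2^\top\mathbf{x}=s_3^\top\mathbf{x}=M$. Then $\mu=\tfrac12$ and $k=1<\tfrac32$. Let $\mathbf{y}=\mat{\Phi}_1$, $\mathbf{e}=0$, $M>2$, and take any $0<\lambda<M/2$. The vector $\mathbf{g}=(0,t,t)$ with $tM=1-2\lambda/M$ has residual $\mathbf{r}=(2\lambda/M)(1,0)^\top$. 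It satisfies $\mat{\Phi}_{2}^\top\mathbf{r}=\mat{\Phi}_{3}^\top\mathbf{r}=\lambda$ and $|\mat{\Phi}_1^\top\mathbf{r}|=2\lambda/M<\lambda$. Since all Lasso minimizers share the same residual, every minimizer drops the true atom.

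\textbf{Fix.} You must add a hypothesis, any one of the following:
\begin{enumerate}
\item run the Lasso on the normalized dictionary (penalty $\lambda\sum_i d_i|g_i|$);
\item assume equal column norms;
\item assume $\frac{k\mu}{1-(k-1)\mu}\max_{i\notin T}d_i<\min_{j\in T}d_j$.
\end{enumerate}
Under any of these your witness argument goes through unchanged. The paper's citation-based proof silently relies on the same unit-norm assumption.
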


Conditions under which assumptions (a) and (b) hold in practice are discussed in Remark~\ref{remark:conditions}.

\begin{proof}
The coherence bound is the Exact Recovery Condition for Basis Pursuit~\cite{donoho2003optimally,tropp2004greed}. In the noiseless equality-constrained case the $L_1$ relaxation matches the $L_0$ combinatorial minimum exactly; in the noisy LASSO regime, support recovery further requires the regularizer to dominate the residual, $\lambda \gtrsim \|\mathbf{\Phi}^\top \mathbf{e}\|_\infty$, which is the role of assumption (b).
\end{proof}

\begin{remark}[Conditions and limitations]
\label{remark:conditions}
Theorem~\ref{thm:composition} requires two conditions that are not automatically satisfied:
\begin{enumerate}[label=(\roman*)]
    \item \textbf{Dictionary quality:} The atoms must faithfully represent the true mechanisms. In practice, this depends on the Atomic-Hebbian learning dynamics (Section~\ref{app:theory:gradients}) and sufficient training data covering each mechanism individually.
    \item \textbf{Coherence:} For the rank-1 atoms $A^{(i)} = u_i s_i^\top$, the columns of the dynamic dictionary factorize as $\mat{\Phi}_i(\mathbf{x}) = (s_i^\top\mathbf{x})\,u_i$, so the scalar projections cancel during normalization and $\mu(\mat{\Phi}(\mathbf{x})) = \mu(U)$ is \emph{independent of the test input} (modulo signs of the projections). The coherence condition therefore reduces to a static incoherence requirement on the output dictionary $U$, which holds generically when atoms encode distinct mechanisms with different spatial/spectral signatures.
\end{enumerate}
The practical implication is that compositional generalization is an \emph{emergent} consequence of successful dictionary learning and sufficient mechanism diversity, not an unconditional architectural guarantee.
\end{remark}

\subsection{Atomic-Hebbian Updates as Exact Implicit Gradients}
\label{app:theory:gradients}

The VN updates its dictionary $\mat{S}$ using a local rule $\Delta \mat{S} \propto \mathbf{r}^x (\mathbf{g}^\star)^\top$ (residual $\times$ activations) without backpropagating through the iterative inference solver. The result that such residual-times-code rules are the exact implicit gradient of an equilibrium energy is a long-established envelope-theorem argument in sparse coding~\cite{olshausen1996emergence}, online dictionary learning~\cite{mairal2009online}, and deep equilibrium models~\cite{bai2019deep}; we restate it here for completeness because it is what underlies the locality of the Atomic-Hebbian update and the validity of treating fast inference as a black box during slow learning.

\textbf{Setup.} Let the layer energy be $E(\mathbf{g}, \mat{S}) = f(\mathbf{g}, \mat{S}) + \lambda\|\mathbf{g}\|_1$ with smooth component $f(\mathbf{g}, \mat{S}) = \frac{1}{2}\|\mathbf{x} - \mat{S}\mathbf{g}\|_2^2$. Inference finds $\mathbf{g}^\star(\mat{S}) = \arg\min_{\mathbf{g}} E(\mathbf{g}, \mat{S})$, and learning minimizes $\mathcal{L}(\mat{S}) = E(\mathbf{g}^\star(\mat{S}), \mat{S})$.

\begin{theorem}[Exact Implicit Gradients via Danskin's Theorem]
\label{thm:danskin}
Assume: (i)~the columns of $\mat{S}$ are in general position (no $d{+}1$ columns lie on a common hyperplane), ensuring the LASSO minimizer $\mathbf{g}^\star(\mat{S})$ is unique \cite{tibshirani2013lasso}; (ii)~the smooth component $f(\mathbf{g}, \mat{S})$ is jointly continuous in $(\mathbf{g}, \mat{S})$. Then:
\begin{equation}
    \frac{d \mathcal{L}}{d \mat{S}} = \left. \frac{\partial f(\mathbf{g}, \mat{S})}{\partial \mat{S}} \right|_{\mathbf{g}=\mathbf{g}^\star} = -\mathbf{r}^x (\mathbf{g}^\star)^\top
\end{equation}
where $\mathbf{r}^x = \mathbf{x} - \mat{S}\mathbf{g}^\star$ is the reconstruction residual.
\end{theorem}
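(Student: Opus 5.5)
The plan is to treat $\mathcal{L}(\mat{S}) = \min_{\mathbf{g}} E(\mathbf{g},\mat{S})$ as a value function and apply Danskin's theorem (equivalently, the envelope theorem). Danskin requires a compact feasible set that does not depend on $\mat{S}$. So the first step is to restrict the minimization to a fixed ball $\mathcal{B} = \{\mathbf{g} : \|\mathbf{g}\|_1 \le R\}$. Since $E(\mathbf{g}^\star,\mat{S}) \le E(\mathbf{0},\mat{S}) = \tfrac12\|\mathbf{x}\|^2$, every minimizer satisfies $\lambda\|\mathbf{g}^\star\|_1 \le \tfrac12\|\mathbf{x}\|^2$. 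Taking $R = \|\mathbf{x}\|^2/\lambda$ therefore gives a ball that contains all minimizers, uniformly over every $\mat{S}$ near the point of interest. On $\mathcal{B}$ the minimum is unchanged, so $\mathcal{L}(\mat{S}) = \min_{\mathbf{g}\in\mathcal{B}} E(\mathbf{g},\mat{S})$, which is the form Danskin needs.

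Next I check the regularity hypotheses. Danskin's theorem gives the directional derivative of $\min_{\mathbf{g}\in\mathcal{B}} E(\mathbf{g},\mat{S})$ as the minimum, over the argmin set, of $\langle \partial_{\mat{S}} E(\mathbf{g},\mat{S}), \cdot\rangle$. For this we need $E$ jointly continuous in $(\mathbf{g},\mat{S})$ and $\partial_{\mat{S}}E$ existing and jointly continuous.

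The $\ell_1$ term does not depend on $\mat{S}$, so $\partial_{\mat{S}} E = \partial_{\mat{S}} f = -(\mathbf{x}-\mat{S}\mathbf{g})\mathbf{g}^\top$. This is polynomial, hence jointly continuous, consistent with assumption (ii). The nonsmoothness of $E$ sits entirely in $\mathbf{g}$, so it does not affect differentiability in the parameter $\mat{S}$.

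By assumption (i) and the uniqueness result of \cite{tibshirani2013lasso}, the argmin set is the singleton $\{\mathbf{g}^\star(\mat{S})\}$. The directional derivative is then linear in the direction, so $\mathcal{L}$ is (Gâteaux, and by continuity Fréchet) differentiable with gradient $\partial_{\mat{S}} f(\mathbf{g}^\star,\mat{S}) = -\mathbf{r}^x(\mathbf{g}^\star)^\top$.

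The last step is the explicit gradient computation. It is the one already carried out in App.~\ref{app:atomic_hebb}: $\nabla_{\mat{S}}\tfrac12\|\mathbf{x}-\mat{S}\mathbf{g}\|^2 = -(\mathbf{x}-\mat{S}\mathbf{g})\mathbf{g}^\top$, evaluated at $\mathbf{g}^\star$. I would add a remark on why no implicit term $\frac{\partial E}{\partial \mathbf{g}}\frac{d\mathbf{g}^\star}{d\mat{S}}$ appears. Even though $\mathbf{g}^\star(\mat{S})$ may be nondifferentiable where the support changes, Danskin never differentiates $\mathbf{g}^\star$, so the optimality condition $0\in\partial_{\mathbf{g}}E$ is never invoked in gradient form.

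I expect the main obstacle to be making the compactness and uniqueness hypotheses airtight rather than the computation itself.

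For compactness, the bound $R$ must hold uniformly on a neighbourhood of $\mat{S}$. It does, because it depends only on $\mathbf{x}$ and $\lambda$.

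For uniqueness, the general-position condition must hold at the point where the gradient is taken. This is a genericity assumption, and I would state it as a pointwise hypothesis. If it fails, the argument still goes through but the conclusion weakens: Danskin then yields only a directional derivative equal to a minimum over the solution set, and $\mathcal{L}$ has a Clarke subdifferential rather than a gradient. I would note this weakening explicitly.
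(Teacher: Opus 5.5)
Your proposal is correct and follows the same route as the paper. Like the paper, you view $\mathcal{L}(\mat{S})=\min_{\mathbf{g}}E(\mathbf{g},\mat{S})$ as a value function, apply Danskin's theorem, use uniqueness of the LASSO minimizer from (i) to make the directional derivative linear, and evaluate $\partial_{\mat{S}}f=-(\mathbf{x}-\mat{S}\mathbf{g})\mathbf{g}^\top$ at $\mathbf{g}^\star$. You are more careful than the paper's proof, which leaves implicit both the $\mat{S}$-independent compact feasible set (your $\ell_1$-ball) and the joint continuity of $\partial_{\mat{S}}f$ that Danskin's theorem needs; your one loose step, ``by continuity Fr\'echet'', is best justified by noting that $\mathcal{L}$ is locally Lipschitz (a minimum over the compact ball of functions uniformly Lipschitz in $\mat{S}$), so G\^ateaux differentiability at the point upgrades to Fr\'echet in finite dimensions.
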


\begin{proof}
The objective $E(\mathbf{g}, \mat{S})$ is non-smooth in $\mathbf{g}$ because of the $L_1$ term, so the classical chain rule does not apply directly. Instead, the optimality of $\mathbf{g}^\star(\mat{S})$ gives the subgradient condition $0 \in \partial_{\mathbf{g}} E(\mathbf{g}^\star, \mat{S})$, where $\partial_{\mathbf{g}}$ denotes the subdifferential. Under assumptions (i)--(ii), the generalized Danskin's theorem~\cite{danskin1967theory,bertsekas1999nonlinear} applies: the total derivative of $\mathcal{L}(\mat{S}) = E(\mathbf{g}^\star(\mat{S}), \mat{S})$ reduces to the partial derivative of the smooth component $f$ at the optimum,
\begin{equation}
    \frac{d \mathcal{L}}{d \mat{S}} = \left. \frac{\partial f}{\partial \mat{S}} \right|_{\mathbf{g}=\mathbf{g}^\star} = -(\mathbf{x} - \mat{S}\mathbf{g}^\star)(\mathbf{g}^\star)^\top = -\mathbf{r}^x (\mathbf{g}^\star)^\top.
\end{equation}
The Atomic-Hebbian update $\Delta \mat{S} \propto \mathbf{r}^x (\mathbf{g}^\star)^\top$ is therefore exact gradient descent on the equilibrium energy, not a heuristic approximation.
\end{proof}

\begin{remark}[Extension to U]
The same argument applies to the lateral dictionary $\mat{U}$ via the upper-consistency energy $\frac{1}{2}\|\mathbf{h}_{\text{target}} - \mat{U}\mathbf{g}\|^2$, yielding $\Delta \mat{U} \propto \mathbf{r}^h (\mathbf{g}^\star)^\top$. For the full VN energy combining both terms, the Atomic-Hebbian updates on $\mat{S}$ and $\mat{U}$ correspond to block coordinate descent on the equilibrium landscape.
\end{remark}

\begin{remark}[Biological plausibility]
The update depends only on quantities locally available at the neuron: the reconstruction residual $\mathbf{r}^x$ (error signal) and the converged activation $\mathbf{g}^\star$ (post-synaptic activity). No backward pass through the inference dynamics is required, making Atomic-Hebb compatible with the locality usually associated with Hebbian plasticity \cite{oja1982simplified}.
\end{remark}

\begin{remark}[Why Atomic-Hebbian updates generalize while backprop memorizes]
\label{remark:hebbian_vs_bp}
Theorem~\ref{thm:danskin} proves that the Atomic-Hebbian update is the \emph{correct} gradient of the equilibrium energy, but does not explain why the resulting dictionary generalizes OOD while a backprop-trained linear decoder with identical architecture memorizes (Section~\ref{sec:blob}). We offer the following intuition. The Atomic-Hebbian rule $\Delta \mat{S} \propto \mathbf{r}^x (\mathbf{g}^\star)^\top$ correlates residuals with activities that are themselves computed \emph{from the same dictionary} via sparse inference. This self-consistency couples the learning signal to the dictionary's current representational structure: atoms are updated to reduce error in directions that the dictionary already spans. \cite{baldi1989neural} showed that for linear networks, such local update rules converge to global optima of the reconstruction objective without the saddle-point issues of generic gradient descent, providing a partial theoretical basis for why the dictionary is well-conditioned. More recently, \cite{whittington2017approximation} proved that inference-then-Hebbian-update in predictive coding networks can approximate backpropagation under specific structural conditions, suggesting the two learning rules share fixed points but may differ in their convergence basins and implicit regularization. In the VN, the two-phase structure, infer first, then update only the selected atoms, imposes an implicit curriculum: the dictionary can only learn what the sparse inference can already partially represent, preventing the global task-loss memorization that plagues end-to-end backpropagation. A complete characterization of this implicit regularization, and why it favors compositionally generalizable dictionaries, remains an open theoretical question.
\end{remark}

\subsection{Convergence of Fast Inference}
\label{app:theory:convergence}

\textbf{Setup.} The layer-local energy is:
\begin{equation}
    E_l(\mathbf{g}_l) = \underbrace{\frac{1}{2}\|\mathbf{x}_l - \mat{S}_l \mathbf{g}_l\|_2^2 + \frac{1}{2}\|\mathbf{h}_{l,\text{target}} - \mat{U}_l \mathbf{g}_l\|_2^2}_{f_l(\mathbf{g}_l)} + \lambda_l\|\mathbf{g}_l\|_1
\end{equation}

\begin{theorem}[Convergence of Proximal Gradient Inference]
\label{thm:convergence}
The ISTA update $\mathbf{g}_l^{(t+1)} = \mathrm{prox}_{\eta_l\lambda_l\|\cdot\|_1}\!\bigl(\mathbf{g}_l^{(t)} - \eta_l \nabla f_l(\mathbf{g}_l^{(t)})\bigr)$ with step size $\eta_l \le L^{-1}$, where $L = \|\mat{S}_l^\top \mat{S}_l + \mat{U}_l^\top \mat{U}_l\|_2$ is the Lipschitz constant of $\nabla f_l$, satisfies:
\begin{equation}
    E_l(\mathbf{g}_l^{(t+1)}) \le E_l(\mathbf{g}_l^{(t)}) - \frac{1}{2\eta_l} \|\mathbf{g}_l^{(t+1)} - \mathbf{g}_l^{(t)}\|_2^2,
\end{equation}
so the energy decreases monotonically and converges to the global minimum at rate $O(1/t)$ in the objective error. Using FISTA acceleration~\cite{beck2009fast}, the objective-error rate improves to $O(1/t^2)$, although the accelerated variant does not preserve per-iteration monotone descent.
\end{theorem}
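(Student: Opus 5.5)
This is the standard proximal-gradient analysis (Beck--Teboulle) specialised to \(E_l=f_l+\lambda_l\|\cdot\|_1\). We first check the hypotheses. The smooth part \(f_l\) is convex and quadratic with Hessian \(H=\mat{S}_l^\top\mat{S}_l+\mat{U}_l^\top\mat{U}_l\succeq 0\). Hence \(\nabla f_l(\mathbf{g})-\nabla f_l(\mathbf{g}')=H(\mathbf{g}-\mathbf{g}')\), which is \(L\)-Lipschitz with \(L=\|H\|_2\). The \(\ell_1\) term is convex, proper and closed, and its prox is exactly the soft-thresholding map \(\mathrm{soft}(\cdot,\eta_l\lambda_l)\). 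When \(\mathbf{h}_{l,\text{target}}\) is undefined (top layer), the same argument holds with \(\mat{U}_l\) removed. Throughout, the targets \(\mathbf{x}_l,\mathbf{h}_{l,\text{target}}\) and the dictionaries are held fixed, as in the single-layer statement.

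\textbf{Per-step descent.} Write \(\mathbf{g}^+\) for the ISTA update of \(\mathbf{g}\). The descent lemma for \(L\)-smooth functions gives
\[
f_l(\mathbf{g}^+)\le f_l(\mathbf{g})+\nabla f_l(\mathbf{g})^\top(\mathbf{g}^+-\mathbf{g})+\tfrac{L}{2}\|\mathbf{g}^+-\mathbf{g}\|^2 .
\]
The update \(\mathbf{g}^+\) minimizes the strongly convex surrogate
\[
\mathbf{z}\mapsto \nabla f_l(\mathbf{g})^\top(\mathbf{z}-\mathbf{g})+\tfrac{1}{2\eta_l}\|\mathbf{z}-\mathbf{g}\|^2+\lambda_l\|\mathbf{z}\|_1 .
\]
This surrogate is \(1/\eta_l\)-strongly convex, so comparing its value at \(\mathbf{g}^+\) with its value at \(\mathbf{z}=\mathbf{g}\) gives
\[
\nabla f_l(\mathbf{g})^\top(\mathbf{g}^+-\mathbf{g})+\lambda_l\|\mathbf{g}^+\|_1\le \lambda_l\|\mathbf{g}\|_1-\tfrac{1}{\eta_l}\|\mathbf{g}^+-\mathbf{g}\|^2 .
\]
Adding the two inequalities and using \(L\le 1/\eta_l\) yields
\[
E_l(\mathbf{g}^+)\le E_l(\mathbf{g})-\bigl(\tfrac1{\eta_l}-\tfrac L2\bigr)\|\mathbf{g}^+-\mathbf{g}\|^2\le E_l(\mathbf{g})-\tfrac{1}{2\eta_l}\|\mathbf{g}^+-\mathbf{g}\|^2 ,
\]
which is the claimed inequality and in particular gives monotone decrease.

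\textbf{Rate.} We use the standard three-point inequality: for every \(\mathbf{z}\),
\[
E_l(\mathbf{g}^+)\le E_l(\mathbf{z})+\tfrac1{2\eta_l}\bigl(\|\mathbf{g}-\mathbf{z}\|^2-\|\mathbf{g}^+-\mathbf{z}\|^2\bigr).
\]
It follows from the same surrogate argument, now using convexity of \(f_l\) at \(\mathbf{z}\). Take \(\mathbf{z}=\mathbf{g}^\ast\), a minimizer. A minimizer exists because \(E_l\) is coercive: it is bounded below by \(\lambda_l\|\mathbf{g}\|_1\), and \(\lambda_l>0\). Summing the inequality over \(t=0,\dots,T-1\), the right-hand side telescopes, and monotonicity gives
\[
E_l(\mathbf{g}^{(T)})-E_l^\ast\le \frac{\|\mathbf{g}^{(0)}-\mathbf{g}^\ast\|^2}{2\eta_l T},
\]
that is, an \(O(1/t)\) rate in objective error. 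Here "global minimum" should be read as the minimum value \(E_l^\ast\), since the minimizer need not be unique without extra assumptions. For FISTA we would simply cite Beck--Teboulle: the estimate-sequence argument gives \(2\|\mathbf{g}^{(0)}-\mathbf{g}^\ast\|^2/(\eta_l(t+1)^2)\), and its momentum step is known not to be monotone.

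\textbf{Main obstacle.} The step needing most care is the three-point inequality, specifically handling the nonsmooth \(\ell_1\) term through the prox optimality condition \((\mathbf{g}-\eta_l\nabla f_l(\mathbf{g}))-\mathbf{g}^+\in\eta_l\lambda_l\,\partial\|\mathbf{g}^+\|_1\) rather than through a gradient. Two further points should be flagged. First, the hard top-\(k\) variant of Eq.~\eqref{eq:vn_update_hard} is a projection onto a nonconvex set and is not covered by this theorem. Second, in the deep stack the refreshed targets change \(f_l\) between sweeps, so the guarantee holds only within a fixed-target phase.
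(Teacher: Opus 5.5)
Your proposal is correct and follows the same standard Beck--Teboulle route the paper invokes: $f_l$ is a convex quadratic with $L=\|\mat{S}_l^\top\mat{S}_l+\mat{U}_l^\top\mat{U}_l\|_2$, the descent lemma applies to the proximal step, and the $O(1/t)$ and $O(1/t^2)$ rates follow, except that you derive the sufficient-decrease inequality and the $O(1/t)$ bound yourself rather than citing them. Your telescoped three-point inequality is also the cleaner way to reach $E_l^\ast$. The paper argues ``bounded below and monotonically non-increasing, hence the iterates converge to a fixed point,'' but that only gives convergence of the energy values (and $\|\mathbf{g}_l^{(t+1)}-\mathbf{g}_l^{(t)}\|_2\to 0$), not of the iterates, whereas your bound $E_l(\mathbf{g}_l^{(T)})-E_l^\ast\le\|\mathbf{g}_l^{(0)}-\mathbf{g}^\ast\|_2^2/(2\eta_l T)$ gives convergence to $E_l^\ast$ directly.
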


\begin{proof}
The smooth component $f_l$ is convex quadratic with Hessian $\nabla^2 f_l = \mat{S}_l^\top\mat{S}_l + \mat{U}_l^\top\mat{U}_l$. Its gradient is $L$-Lipschitz with $L = \lambda_{\max}(\nabla^2 f_l)$. The descent lemma for proximal gradient methods \cite{beck2009fast} gives the stated inequality for $\eta_l \le 1/L$. Since $E_l$ is bounded below and monotonically non-increasing, the iterates converge to a fixed point. Convexity of $E_l$ ensures this fixed point is the global minimum. Convergence rates follow from \cite[Theorems 3.1 and 4.4]{beck2009fast}.
\end{proof}

\begin{remark}[Soft-$k$ vs.\ hard-$k$ regimes]
\label{remark:topk}
Theorem~\ref{thm:convergence} covers the soft-$k$ regime (pure proximal $\ell_1$, Eq.~\eqref{eq:vn_update_soft}), used in our experiments at hidden layers. The hard-$k$ regime of Eq.~\eqref{eq:vn_update_hard}, applied at the top layer when a strict cardinality budget is required, composes the proximal step with a top-$k$ projection. This introduces a non-convex constraint $\|\mathbf{g}\|_0 \le k$, so convergence to the \emph{global} optimum is no longer guaranteed; the solver may converge to a stationary point of the combinatorial problem. We note three points:
\begin{enumerate}[label=(\roman*)]
    \item The energy still decreases monotonically at each step (the top-$k$ projection is a valid proximal operator for the indicator function of the $\ell_0$ ball), ensuring convergence to a stationary point.
    \item For sufficiently incoherent dictionaries, the soft-$k$ and hard-$k$ regimes select the same support \cite{tropp2004greed}, so the two regimes are equivalent under well-conditioned dictionaries.
    \item Our multi-layer settling interleaves updates across layers (block coordinate descent), which inherits the monotone descent property layer-by-layer but lacks a global convergence rate guarantee for the joint objective. The closest theoretical reference is the multi-layer convolutional sparse coding framework of \cite{sulam2019multilayer}, which provides convergence guarantees for a related hierarchical pursuit under incoherence conditions; extending those results to the VN's bidirectional ($\mat{S}/\mat{U}$) atom structure is an open problem.
\end{enumerate}
\end{remark}

\subsection{Effective Capacity Bound}
\label{app:theory:capacity}

Standard dense layers deploy their entire parameter budget for every input. We show that the VN's sparse inference structurally restricts the per-sample hypothesis space.

\begin{proposition}[Rank Bound for the Synthesized Operator]
\label{prop:rank}
For any input $\mathbf{x}$, let $\mat{W}^\star(\mathbf{x}) = \mat{U}\,\mathrm{diag}(\mathbf{g}^\star)\,\mat{S}^\top$ denote the operator synthesized from the inferred sparse code. Its rank is bounded by the activation sparsity:
\begin{equation}
    \mathrm{rank}(\mat{W}^\star(\mathbf{x})) \le \|\mathbf{g}^\star\|_0 \le k
\end{equation}
where $k$ is the top-$k$ sparsity constraint. The corresponding per-sample parameter budget is bounded by $k(d + m)$, compared to $md$ for a dense layer.
\end{proposition}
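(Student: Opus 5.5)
The plan is to exploit the factorization $\mat{W}^\star(\mathbf{x}) = \mat{U}\,\mathrm{diag}(\mathbf{g}^\star)\,\mat{S}^\top = \sum_{i} g^\star_i\, \mathbf{u}_i \mathbf{s}_i^\top$ from Eq.~\eqref{eq:operator-synthesis}. Let $\mathcal{A} = \{i : g^\star_i \neq 0\}$ be the active support, so $|\mathcal{A}| = \|\mathbf{g}^\star\|_0$. Every term with $i \notin \mathcal{A}$ vanishes, which gives $\mat{W}^\star(\mathbf{x}) = \sum_{i\in\mathcal{A}} g^\star_i\,\mathbf{u}_i\mathbf{s}_i^\top$. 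This is a sum of $|\mathcal{A}|$ matrices, each of rank at most one (rank exactly one for unit-norm columns and nonzero coefficient). By subadditivity of rank, $\mathrm{rank}(\mat{W}^\star(\mathbf{x})) \le |\mathcal{A}| = \|\mathbf{g}^\star\|_0$.

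An equivalent argument avoids subadditivity. Restrict to the active columns, writing $\mat{W}^\star = \mat{U}_{\mathcal{A}}\,\mathrm{diag}(\mathbf{g}^\star_{\mathcal{A}})\,\mat{S}_{\mathcal{A}}^\top$. Then the column space of $\mat{W}^\star$ lies inside $\mathrm{span}\{\mathbf{u}_i\}_{i\in\mathcal{A}}$, whose dimension is at most $|\mathcal{A}|$. I would state one of the two and mention the other in passing.

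For the second inequality $\|\mathbf{g}^\star\|_0 \le k$, I would invoke the hard top-$k$ update of Eq.~\eqref{eq:vn_update_hard}. Its final operation is the projection $\Pi_k$, which zeroes all but the $k$ largest-magnitude entries. Any iterate it outputs, and in particular the settled code, therefore has at most $k$ nonzeros.

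This step is the only delicate point, because the bound must hold for whatever code inference actually returns. In the pure soft-$k$ regime of Eq.~\eqref{eq:vn_update_soft} there is no cardinality cap, so $k$ must be read as the explicit budget. I will state that the second inequality presumes the hard-$k$ regime, or more generally any regime in which $k$ is defined as an upper bound on the support size. Under the paper's convention the hard cap applies at the top layer, and elsewhere the statement holds with $k := \|\mathbf{g}^\star\|_0$.

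Finally, the parameter count follows by counting degrees of freedom in the active atoms. Each active atom is specified by a pair $(\mathbf{u}_i,\mathbf{s}_i)\in\R^m\times\R^d$, which is $d+m$ numbers, plus one scalar coefficient that can be absorbed into either vector. So the per-sample operator is determined by at most $k(d+m)$ parameters (or $k(d+m+1)$ if the coefficients are counted separately, which is still $O(k(d+m))$). A dense $m\times d$ layer, by contrast, uses all $md$ entries for every input. I expect no real obstacle here. The only care needed is to phrase the count as parameters \emph{deployed per sample}, since the stored library contains $K(d+m)$ parameters in total.
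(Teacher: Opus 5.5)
Your proof is correct and essentially the paper's: the paper argues in one line via the product bound $\mathrm{rank}(\mat{U}\,\mathrm{diag}(\mathbf{g}^\star)\,\mat{S}^\top)\le\mathrm{rank}(\mathrm{diag}(\mathbf{g}^\star))=\|\mathbf{g}^\star\|_0$, which is your column-space variant, and subadditivity over the rank-one sum is an equally valid substitute. The paper's proof only asserts two points that you argue explicitly: that $\|\mathbf{g}^\star\|_0\le k$ holds only under the hard top-$k$ projection, and the degree-of-freedom count giving $k(d+m)$.
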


\begin{proof}
$\mat{W}^\star = \mat{U}\,\mathrm{diag}(\mathbf{g}^\star)\,\mat{S}^\top$. By sub-multiplicativity: $\mathrm{rank}(\mat{W}^\star) \le \mathrm{rank}(\mathrm{diag}(\mathbf{g}^\star)) = \|\mathbf{g}^\star\|_0 \le k$.
\end{proof}

This rank bound characterizes the sample-specific operator induced by the inferred sparse code, not a literal forward multiplication $\mat{W}^\star(\mathbf{x})\mathbf{x}$.

\begin{remark}[Capacity and reuse]
The condition $k \ll \frac{md}{m+d}$ (where $d$ is the input dimension and $m$ the output dimension) ensures that the per-sample transformation is strictly lower-rank than what a dense layer could represent. Combined with a shared dictionary learned across all training samples, this forces the slow learning phase to discover reusable, compositionally generalizable atoms rather than sample-specific memorization patterns. From a statistical learning perspective, the Rademacher complexity of $k$-sparse linear models scales as $\mathcal{O}(\sqrt{k \log(K/k) / n})$ rather than $\mathcal{O}(\sqrt{d / n})$ for dense models \cite{bartlett2002rademacher}, providing a formal basis for the improved generalization of sparse representations when $k \ll d$.
\end{remark}

\begin{remark}[Failure conditions]
\label{remark:failure}
The rank constraint is also a source of failure when violated:
\begin{enumerate}[label=(\roman*)]
    \item \textbf{Insufficient sparsity:} When the target function requires more than $k$ independent components (e.g., a scene with $k{+}1$ superimposed objects), the VN necessarily under-fits because its effective weight matrix cannot span the required subspace.
    \item \textbf{Dictionary collapse:} When atoms become near-duplicate during Atomic-Hebbian learning, the mutual coherence $\mu(\mat{S})$ (defined in Theorem~\ref{thm:composition}) increases and the recovery condition may be violated, causing inference to select incorrect atoms.
    \item \textbf{Incomplete convergence:} The practical settle uses a finite number of sweeps. If inference terminates before the energy has converged sufficiently, the code $\mathbf{g}$ may not satisfy the optimality conditions required by Theorems~\ref{thm:danskin} and~\ref{thm:composition}.
\end{enumerate}
In practice, the decorrelation step in the slow update and the energy-tracking accept/reject mechanism (Section~\ref{sec:fast}) mitigate conditions (ii) and (iii), but do not eliminate them in the worst case.
\end{remark}

\subsection{Top-Down Coupling}
\label{app:theory:topdown}

The VN's top-down information enters the lower-layer code $\mathbf{g}_l$ through the consistency term $\frac{1}{2}\|\mathbf{h}_{l,\mathrm{target}} - \mat{U}_l \mathbf{g}_l\|_2^2$ in the layer energy $E_l$. During the downward sweep (Sec.~\ref{sec:fast}), the target is refreshed from the higher-layer reconstruction, $\mathbf{h}_{l,\mathrm{target}} \leftarrow \mat{S}_{l+1}\mathbf{g}_{l+1}$, and a coupled proximal step on $\mathbf{g}_l$ uses both bottom-up and top-down residuals.

\begin{remark}[Top-down coupling via energy consistency]
\label{remark:topdown}
The top-down term contributes the gradient component $\mat{U}_l^\top(\mat{U}_l\mathbf{g}_l - \mathbf{h}_{l,\mathrm{target}})$ to the smooth part of $E_l$, so the proximal updates of Eqs.~\eqref{eq:vn_update_soft}--\eqref{eq:vn_update_hard} automatically incorporate a top-down correction whenever $\mathbf{h}_{l,\mathrm{target}}$ has been refreshed. This integrates predictive-coding-style top-down information directly into the energy descent rather than as a separate update rule.

Unlike classical predictive coding \cite{rao1999predictive}, the VN does not enforce strict weight transport between forward and backward pathways. The lateral dictionary $\mat{U}_l$ used in the top-down consistency term is learned independently from any forward Jacobian; implementations may also use untied dictionaries for the upward message and the downward consistency, in which case the top-down dictionary plays a role analogous to a learned feedback weight as in the Feedback Alignment principle \cite{lillicrap2016random}. We do not prove convergence of any such alignment for the VN's specific dynamics; an empirical ablation of the top-down coupling strength is reported in Appendix~\ref{app:td_ablation}. Conceptually, the VN's fast--slow separation mirrors Deep Equilibrium Models \cite{bai2019deep}, where forward inference computes a fixed point and learning differentiates through the equilibrium; the key difference is that the VN uses an explicit energy function with convergence guarantees (Theorem~\ref{thm:convergence}) rather than implicit differentiation through a black-box fixed-point solver.
\end{remark}

\begin{remark}[Connections to related frameworks]
The VN's inference-learning separation connects to several established theoretical frameworks:
\begin{itemize}
    \item \textbf{Equilibrium models:} The fast-slow separation mirrors Deep Equilibrium Models \cite{bai2019deep}, but with an explicit energy function and convergence guarantee rather than implicit differentiation.
    \item \textbf{Sparse coding:} The single-layer inference is equivalent to the LASSO / Basis Pursuit \cite{chen1998atomic}, with the multi-layer extension following the hierarchical sparse coding framework of \cite{sulam2019multilayer}.
    \item \textbf{Energy-based models:} The overall architecture fits within the energy-based learning framework of \cite{lecun2006tutorial}, where inference minimizes energy and learning minimizes the minimum energy.
    \item \textbf{Predictive coding:} The top-down coupling (Remark~\ref{remark:topdown}) connects to the predictive coding framework \cite{rao1999predictive,whittington2017approximation}, where higher layers predict lower-layer activity and residuals drive learning.
\end{itemize}
\end{remark}

\section{Spatial Decoding: Extended Results}
\label{app:fiete_extended}

This section extends the spatial decoding benchmark from Fig.~\ref{fig:blob_fiat} with additional ablations on rank, depth, encoding, and holdout structure. The aim is to show more explicitly which parts of the VN design are responsible for the off-support robustness in the bump task.


\paragraph{Increasing architectural rank does not help.}
A rank-$N$ factorized model trained on rank-1 data collapses to rank-1:
extra streams become redundant copies
(Table~\ref{tab:fiete_rankn}).

\begin{table}[h]
\centering
\caption{\textbf{Rank ablation.} All trained on single bumps.}
\label{tab:fiete_rankn}
\smallskip
\small
\begin{tabular}{@{}lccc@{}}
\toprule
Architecture & $N{=}1$ \ood{} & $N{=}2$ & $N{=}3$ \\
\midrule
Rank-1 & $2.0{\times}10^{-5}$ & $7.3{\times}10^{-3}$ & $1.1{\times}10^{-2}$ \\
Rank-2 & $3.0{\times}10^{-5}$ & $7.8{\times}10^{-3}$ & $1.1{\times}10^{-2}$ \\
Rank-3 & $2.6{\times}10^{-5}$ & $8.1{\times}10^{-3}$ & $1.2{\times}10^{-2}$ \\
Full 2D VN & $4.0{\times}10^{-3}$ & $\mathbf{3.7{\times}10^{-4}}$ & $\mathbf{4.4{\times}10^{-4}}$ \\
\bottomrule
\end{tabular}
\end{table}


\paragraph{VN architecture ablation.}
\label{app:fiete_vn_ablations}
Table~\ref{tab:fiete_vn_depth}: depth~2 captures most benefit over
depth~1; $k{=}8$ outperforms $k{=}16$ (sparser codes compose better);
mix${=}0.7$ improves VN-3L by $23\%$ because the deeper hierarchy
benefits from stronger top-down refinement of reconstruction codes.

\begin{table}[h]
\centering
\caption{\textbf{VN depth, sparsity, and top-down ablation.}}
\label{tab:fiete_vn_depth}
\smallskip
\small
\begin{tabular}{@{}lccccc@{}}
\toprule
Config & Depth & $k$ & Ratio & $N{=}2$ & $N{=}3$ \\
\midrule
VN-1L & 1 & 16 & $1.16{\times}$ & $2.65{\times}10^{-4}$ & $3.01{\times}10^{-4}$ \\
VN-2L & 2 & 8 & $1.08{\times}$ & $9.53{\times}10^{-5}$ & $9.88{\times}10^{-5}$ \\
VN-3L & 3 & 8 & $1.07{\times}$ & $9.93{\times}10^{-5}$ & $1.01{\times}10^{-4}$ \\
VN-3L (mix${}=0.7$) & 3 & 8 & $\mathbf{1.06{\times}}$ & $\mathbf{7.62{\times}10^{-5}}$ & $\mathbf{7.76{\times}10^{-5}}$ \\
\bottomrule
\end{tabular}
\end{table}


\paragraph{Input encoding comparison.}
\label{app:fiete_encoding}
Table~\ref{tab:fiete_encoding_full} shows all encoding${\times}$model
combinations. Fourier features benefit all models, with VN-1L improving
$5{\times}$ and the SAE improving $10{\times}$ over bump.
The $\beta$-VAE benefits most from Fourier-14 ($2.4{\times}10^{-4}$).
The SAE shows dramatic encoding sensitivity (from $1.8{\times}10^{-2}$
to $4.2{\times}10^{-4}$), while VN-3L is robust across all encodings
($5.5{-}7.4{\times}10^{-5}$).

\begin{table}[h]
\centering
\caption{\textbf{Full encoding comparison} ($N{=}2$ \ood{}-1 MSE).}
\label{tab:fiete_encoding_full}
\smallskip
\small
\begin{tabular}{@{}lccccc@{}}
\toprule
Encoding (dim) & SAE & $\beta$-VAE ($\beta{=}1$) & $\beta$-VAE ($\beta{=}4$) & VN-1L & VN-3L \\
\midrule
Bump (56) & $4.37{\times}10^{-3}$ & $4.75{\times}10^{-4}$ & $8.22{\times}10^{-4}$ & $2.93{\times}10^{-4}$ & $6.90{\times}10^{-5}$ \\
One-hot (56) & $4.76{\times}10^{-3}$ & $5.52{\times}10^{-4}$ & $5.95{\times}10^{-4}$ & $3.02{\times}10^{-4}$ & $7.43{\times}10^{-5}$ \\
Fourier-14 (56) & $4.24{\times}10^{-4}$ & $2.38{\times}10^{-4}$ & $4.88{\times}10^{-4}$ & $6.29{\times}10^{-5}$ & $5.49{\times}10^{-5}$ \\
Fourier-7 (28) & $3.60{\times}10^{-3}$ & $2.69{\times}10^{-4}$ & $3.36{\times}10^{-4}$ & $5.93{\times}10^{-5}$ & $5.90{\times}10^{-5}$ \\
Scalar (2) & $1.81{\times}10^{-2}$ & $1.42{\times}10^{-3}$ & $1.72{\times}10^{-3}$ & $1.04{\times}10^{-3}$ & $6.56{\times}10^{-5}$ \\
\bottomrule
\end{tabular}
\end{table}


\paragraph{Sharper bumps ($\sigma{=}0.5$).}
Table~\ref{tab:fiete_sharp}: the Full~2D~VN holds at $1.08{\times}$
ratio; on $N{=}2$ sharp bumps it is $7.5{\times}$ more precise than
ML-Fact.\ VN.

\begin{table}[h]
\centering
\caption{\textbf{$\sigma{=}0.5$}: single-bump (left) and $N{=}2$ (right).}
\label{tab:fiete_sharp}
\smallskip
\small
\begin{tabular}{@{}lccc@{}}
\toprule
Model & \ood{}/\id{} & $N{=}2$ \ood{}-1 \\
\midrule
CNN & $3.39{\times}$ & $3.09{\times}10^{-3}$ \\
Fiete Fact. & $1.05{\times}$ & $4.29{\times}10^{-3}$ \\
ML-Fact.\ VN & $0.97{\times}$ & $2.08{\times}10^{-3}$ \\
Full 2D VN & $1.08{\times}$ & $\mathbf{2.71{\times}10^{-4}}$ \\
\bottomrule
\end{tabular}
\end{table}

\paragraph{Annular holdout.}
Ring-shaped \ood{}: Full~2D~VN $1.04{\times}$, CNN $4.57{\times}$.

\paragraph{Emergent factorization.}
\label{app:fiete_gaps}
Removing hand-built components step by step
(supervised $S$ $\to$ Hebbian $\to$ no axis loss $\to$ flat 2D dict)
preserves compositional generalization at every stage
($1.31{\times}$ $\to$ $1.56{\times}$ $\to$ $1.47{\times}$ $\to$ $1.19{\times}$).

\paragraph{Entangled input.}
\label{app:entangled}
When input${=}$output (784-D flattened image), all models solve $N{=}2$
trivially (${\sim}10^{-5}$) because composition reduces to autoencoding.
The challenge requires a dimensionality mismatch (56-D $\to$ 784-D).

\section{Function Composition: Extended Results}
\label{app:fc}

This section extends the function-composition results from Fig.~\ref{fig:fc_overview} with full metric grids, qualitative Hard-OOD examples, and masked-inference details.

This appendix contains two blocks of supporting material:
(i) the full evaluation grids behind both protocols -- reconstruction
(Appendix~\ref{app:fc-full-grid}) and masked prediction
(Appendix~\ref{app:fc-mask-grid});
(ii) qualitative-reconstruction galleries for both protocols
(Appendix~\ref{app:fc-galleries}) and algorithmic details of
masked-FISTA inference (Appendix~\ref{app:fc-masked-fista}).

\subsection{Full Reconstruction Grid (Clean Training)}
\label{app:fc-full-grid}

Table~\ref{tab:fc-full-grid} reports the per-model, per-difficulty
reconstruction MSE underlying the bars summarized in
Fig.~\ref{fig:fc_overview}b. The masked-prediction counterpart
appears in Table~\ref{tab:fc-mask-grid} and qualitative galleries for
both protocols are collected in App.~\ref{app:fc-galleries}.

\begin{table}[h]
  \centering
  \scriptsize
  \setlength{\tabcolsep}{4pt}
  \renewcommand{\arraystretch}{1.05}
  \begin{tabular}{l
                  S[table-format=1.2e-1] S[table-format=1.2e-1] S[table-format=1.2e-1]}
    \toprule
    Model & {ID} & {Easy} & {Hard} \\
    \midrule
    NeuralODE              & 2.16e-1 & 1.20e+0 & 1.44e+0 \\
    rSLDS                  & 1.82e-1 & 1.12e+0 & 1.18e+0 \\
    DeepAE3                & 1.90e-4 & 3.37e-1 & 4.30e-1 \\
    SparseAE3              & 3.53e-4 & 2.71e-1 & 3.71e-1 \\
    RNN                    & 5.61e-4 & 1.72e-2 & 5.34e-2 \\
    VN-1L                  & 2.76e-2 & 4.26e-2 & 7.18e-2 \\
    Transformer            & 2.1e-5  & 2.7e-4  & 1.5e-3 \\
    Mamba-bottleneck       & 4.60e-5 & 1.69e-2 & 1.89e-2 \\
    VN-5L                  & 7.80e-5 & 1.59e-4 & 4.21e-3 \\
    VN-3L                  & 9.60e-5 & 1.73e-4 & 4.16e-3 \\
    Mamba                  & 1.60e-5 & 3.05e-4 & 8.22e-4 \\
    \bottomrule
  \end{tabular}
  \caption{\textbf{Full Function-Composition reconstruction grid}
    (clean training; per-model, per-difficulty MSE).}
  \label{tab:fc-full-grid}
\end{table}

\subsection{Full Mask-Prediction Grid (Clean Training)}
\label{app:fc-mask-grid}

Table~\ref{tab:fc-mask-grid} reports the mean masked MSE for every
(model, mask regime, difficulty) cell at Tier~1 ($n{=}600$ test
samples, clean training).  All VN numbers use masked-FISTA inference
with $n_{\mathrm{outer}}{=}5$; all other models receive the masked
input with zeroed positions.  The
\textsc{mask\_forecast\_25}/Hard-OOD column corresponds to the
masked-prediction bars summarized in
Fig.~\ref{fig:fc_overview}c.

\begin{table}[h]
  \centering
  \scriptsize
  \setlength{\tabcolsep}{3.0pt}
  \renewcommand{\arraystretch}{1.05}
  \begin{tabular}{l
                  S[table-format=1.2e-1] S[table-format=1.2e-1] S[table-format=1.2e-1]
                  S[table-format=1.2e-1] S[table-format=1.2e-1] S[table-format=1.2e-1]}
    \toprule
    & \multicolumn{3}{c}{\textsc{mask\_forecast\_25}}
    & \multicolumn{3}{c}{\textsc{mask\_forecast\_50}} \\
    \cmidrule(lr){2-4}\cmidrule(lr){5-7}
    Model
      & {ID} & {Easy} & {Hard}
      & {ID} & {Easy} & {Hard} \\
    \midrule
    \textbf{VN-3L}      & 5.50e-1 & 8.06e-1 & \textbf{3.91e-1} & 8.84e-1 & 9.38e-1 & \textbf{6.22e-1} \\
    \textbf{VN-5L}      & 5.50e-1 & 8.04e-1 & \textbf{3.92e-1} & 8.85e-1 & 9.37e-1 & \textbf{6.22e-1} \\
    VN-1L               & 3.15e-1 & 5.90e-1 & 5.81e-1 & 8.24e-1 & 8.86e-1 & 7.73e-1 \\
    Mamba-bottleneck    & 5.40e-1 & 8.72e-1 & 6.24e-1 & 7.82e-1 & 9.80e-1 & 7.36e-1 \\
    RNN                 & 6.88e-1 & 8.25e-1 & 1.23e+0 & 8.08e-1 & 9.50e-1 & 8.21e-1 \\
    DeepAE3             & 1.76e-1 & 7.34e-1 & 1.49e+0 & 5.22e-1 & 1.01e+0 & 9.46e-1 \\
    SparseAE3           & 1.80e-1 & 6.73e-1 & 1.51e+0 & 5.37e-1 & 9.87e-1 & 8.95e-1 \\
    Transformer         & 1.22e+0 & 9.68e-1 & 1.48e+0 & 1.07e+0 & 1.00e+0 & 9.57e-1 \\
    Mamba               & 1.34e+0 & 9.80e-1 & 1.76e+0 & 1.11e+0 & 1.01e+0 & 1.14e+0 \\
    NeuralODE           & 1.92e+0 & 1.61e+0 & 1.27e+0 & 2.16e+0 & 1.91e+0 & 1.25e+0 \\
    rSLDS               & 1.92e+0 & 1.56e+0 & 2.34e+0 & 2.11e+0 & 2.17e+0 & 2.15e+0 \\
    \midrule
    & \multicolumn{3}{c}{\textsc{mask\_random\_30}}
    & \multicolumn{3}{c}{\textsc{mask\_block\_128}} \\
    \cmidrule(lr){2-4}\cmidrule(lr){5-7}
    Model
      & {ID} & {Easy} & {Hard}
      & {ID} & {Easy} & {Hard} \\
    \midrule
    \textbf{VN-3L}      & 6.61e-3 & 5.71e-3 & 1.31e-2 & 4.59e-1 & 6.92e-1 & \textbf{2.97e-1} \\
    \textbf{VN-5L}      & 5.56e-3 & 4.58e-3 & \textbf{1.20e-2} & 4.42e-1 & 6.79e-1 & 3.04e-1 \\
    VN-1L               & 8.49e-2 & 8.36e-2 & 1.71e-1 & 2.84e-1 & 4.10e-1 & 5.12e-1 \\
    Mamba-bottleneck    & 1.83e-2 & 4.20e-2 & 4.35e-2 & 5.36e-1 & 8.21e-1 & 4.82e-1 \\
    RNN                 & 9.65e-2 & 1.36e-1 & 2.21e-1 & 5.06e-1 & 7.82e-1 & 6.91e-1 \\
    DeepAE3             & 5.41e-2 & 3.85e-1 & 5.71e-1 & 2.02e-1 & 7.23e-1 & 7.90e-1 \\
    SparseAE3           & 5.44e-2 & 3.81e-1 & 5.25e-1 & 1.97e-1 & 6.96e-1 & 7.58e-1 \\
    Transformer         & 2.46e-1 & 2.36e-1 & 2.33e-1 & 8.12e-1 & 9.59e-1 & 7.62e-1 \\
    Mamba               & 2.35e-1 & 2.33e-1 & 2.40e-1 & 9.00e-1 & 9.95e-1 & 8.18e-1 \\
    NeuralODE           & 1.00e+0 & 1.43e+0 & 1.41e+0 & 9.65e-1 & 1.60e+0 & 1.92e+0 \\
    rSLDS               & 6.74e-1 & 1.40e+0 & 1.11e+0 & 7.99e-1 & 1.66e+0 & 2.12e+0 \\
    \bottomrule
  \end{tabular}
    \caption{\textbf{Full Function-Composition mask-prediction grid}
    (mean masked MSE, clean training, $n{=}600$).  Rows ordered by the
    main-paper ranking on \textsc{mask\_forecast\_25}/Hard.  Bold:
    best entry per (regime, difficulty) column in the Hard-OOD block.
    VN is the top-2 on every Hard-OOD cell; on
    \textsc{mask\_random\_30}/Hard the margin is roughly two orders of
    magnitude over the autoencoder family.}
  \label{tab:fc-mask-grid}
\end{table}

\subsection{Hard-OOD Qualitative Galleries}
\label{app:fc-galleries}

Aggregated MSE numbers in Tables~\ref{tab:fc-full-grid}
and~\ref{tab:fc-mask-grid} establish that the VN stacks lead the
Hard-OOD block, but they do not reveal \emph{how} the non-VN
baselines fail.  Two qualitatively different failure modes can
produce similar mean MSEs (e.g.\ uniform shrinkage to the dataset
mean vs.\ tracking the gross shape but losing high-frequency
content), and these have different implications for what the model
has actually learned.  The galleries below inspect representative
worst-case non-VN samples to surface per-baseline failure
signatures.

Figures~\ref{fig:fc-hard-examples}--\ref{fig:fc-hard-examples-gallery}
show the reconstruction-protocol qualitative behavior.  The
single-example figure is the Hard-OOD sample on which the gap between
VN and the three non-VN principled baselines
(\textsc{Mamba-bottleneck}, \textsc{DeepAE3}, \textsc{RNN}) is largest
under clean reconstruction; the gallery extends this to four samples
over the full 10-model roster.  Across all four,
\textsc{DeepAE3} collapses to a mean / linear interpolant, the RNN
traces a low-frequency envelope, and \textsc{Mamba-bottleneck}
follows the gross signal shape but misses the high-frequency detail,
while both VN stacks track within the visible line width.

\begin{figure}[h]
  \centering
  \includegraphics[
    width=0.98\linewidth,
    trim={0pt 0pt 0pt 32pt},  
    clip
  ]{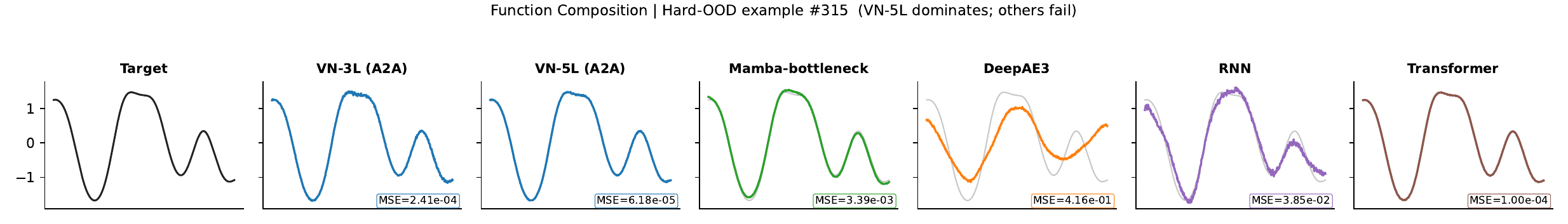}
  \caption{\textbf{Hard-OOD reconstruction (single worst non-VN
  sample)}.  Sample selected as $\arg\max_{i}\rho_{i}$ with
  $\rho_{i}{=}\mathrm{mean}(\mathrm{MSE}_{\mathrm{non-VN}})/\mathrm{MSE}_{\textsc{VN-5L}}$.}
  \label{fig:fc-hard-examples}
\end{figure}

\begin{figure*}[h]
  \centering
  \includegraphics[
    width=0.98\linewidth,
    trim={0pt 0pt 0pt 32pt},  
    clip
  ]{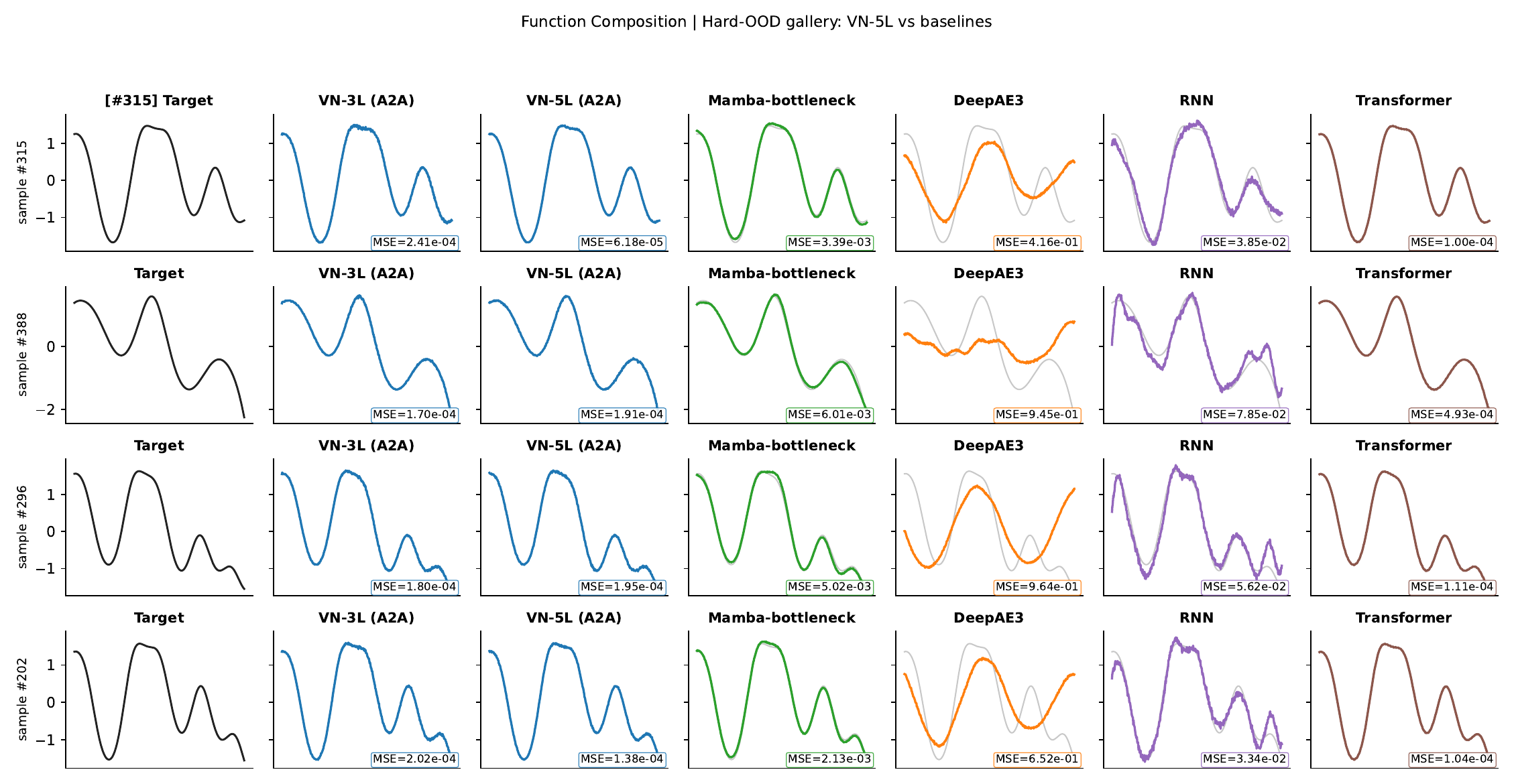}
  \caption{\textbf{Hard-OOD reconstruction gallery} (four worst-case
  non-VN samples).}
  \label{fig:fc-hard-examples-gallery}
\end{figure*}

Figure~\ref{fig:fc-hard-examples-gallery-mask} reports the mask-protocol
counterpart of the reconstruction gallery.  Three Hard-OOD samples
are chosen by
$\rho^{\mathrm{mask}}_{i}=
  \mathrm{mean}(\mathrm{MSE}^{\mathrm{mask}}_{\mathrm{non-VN}})/
  \mathrm{MSE}^{\mathrm{mask}}_{\textsc{VN-5L}}$
on the hidden positions and the same mild ID-outlier filter as in the
main paper.  On every row both VN stacks track the signal closely
through the masked tail; the autoencoder family collapses, latent-
dynamics models drift off the signal entirely, and both Mamba variants
catch the early observed part but diverge once the mask hides the
tail.

\begin{figure*}[h]
  \centering
  \includegraphics[
    width=0.98\linewidth,
    trim={0pt 0pt 0pt 20pt},  
    clip
  ]{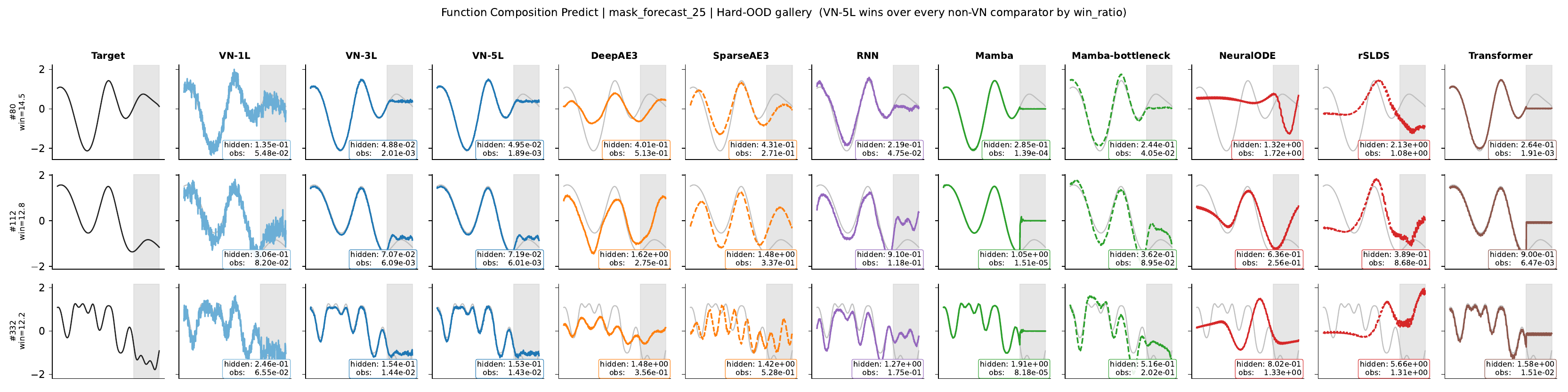}
  \caption{\textbf{Hard-OOD masked-prediction gallery}
  (\textsc{mask\_forecast\_25}).  Three worst-case non-VN samples over
  the full ten-comparator roster.  Grey curves: target; grey shading:
  hidden region; colored curves: model output; per-panel numbers:
  MSE on hidden positions and on observed positions.}
  \label{fig:fc-hard-examples-gallery-mask}
\end{figure*}

The non-VN failures are not a single uniform phenomenon: each baseline
class has its own qualitative signature
(\textsc{DeepAE3} mean collapse, RNN low-frequency envelope,
\textsc{Mamba-bottleneck} shape-without-detail; under masking,
autoencoders collapse on the hidden tail and Mamba variants diverge
once the observation window ends).  The aggregate MSE gap therefore
reflects different underlying inductive biases rather than a uniform
quantitative gap.  Both VN stacks consistently track the signal within
the visible line width across both protocols, which matches the
quantitative ranking in
Tables~\ref{tab:fc-full-grid}--\ref{tab:fc-mask-grid}.

\subsection{Masked-FISTA VN Inference}
\label{app:fc-masked-fista}

For an observed signal $x_{\mathrm{obs}}\in\R^D$, mask $m\in\{0,1\}^D$, and frozen
VN dictionary $S$, the masked imputation problem is
\begin{equation}
\label{eq:fc-masked-argmin}
\min_{z}\ \tfrac{1}{2}\bigl\|m \odot (x_{\mathrm{obs}} - S z)\bigr\|_2^2 \;+\; \lambda\,\|z\|_1,
\end{equation}
where the observation-mismatch loss is restricted to the observed
positions and hidden positions are filled by the sparse linear
synthesis $S z$.

Masked-FISTA VN is an outer imputation loop around the ordinary VN
inference call $\textsc{VN\_infer}(\cdot)$ that solves
Eq.~\eqref{eq:fc-masked-argmin} without retraining any VN weights
(Algorithm~\ref{alg:masked_fista}).

\begin{algorithm}[H]
\caption{Masked-FISTA VN inference}
\label{alg:masked_fista}
\begin{algorithmic}[1]
\Require Observed signal $x_{\mathrm{obs}}\in\R^{D}$, mask $m\in\{0,1\}^{D}$, frozen VN stack with dictionary $S$, outer iterations $n_{\mathrm{outer}}{=}5$
\State $z^{(0)} \leftarrow \textsc{VN\_infer}(m\odot x_{\mathrm{obs}})$
\For{$k = 1,\dots,n_{\mathrm{outer}}$}
  \State $x^{(k)} \leftarrow m\odot x_{\mathrm{obs}} + (1-m)\odot S\,z^{(k-1)}$
  \State $z^{(k)} \leftarrow \textsc{VN\_infer}(x^{(k)})$
\EndFor
\State \Return $\hat{x} = S\,z^{(n_{\mathrm{outer}})}$
\end{algorithmic}
\end{algorithm}

At convergence, observed positions match $x_{\mathrm{obs}}$ exactly and
hidden positions are filled by the sparse linear synthesis
$S\,z^{\star}$.  The fixed point coincides with the minimiser of
\eqref{eq:fc-masked-argmin} because each inner call already solves an
$\ell_{1}$-regularized residual against a fully observed signal.  We
fix $n_{\mathrm{outer}}{=}5$ throughout the paper; empirically five
outer steps are sufficient for the masked residual to fall below the
FISTA tolerance on every VN model we tested.  No VN weight is updated
in this procedure, so the method transfers the reconstruction-trained
dictionary to the prediction setting at a $\approx 5\times$ inference
cost and \emph{zero} retraining cost.  Non-VN baselines are evaluated
with hidden positions set to zero, exactly matching how their
recurrent / state-space / autoencoder decoders would see masked
evidence in deployment.

\section{n-Body Dynamics: Extended Results}
\label{app:nbody_extended}

This section extends the n-body results from Fig.~\ref{fig:gravity_comp} with force-law details, rollout analyses, and additional OOD probes. These diagnostics test whether the same compositional advantages seen in one-step prediction remain visible under longer-horizon physical dynamics.

This appendix consolidates the n-body force-law details:
force-law definitions and design rationale
(Sec.~\ref{app:force_laws}), the top-down coupling ablation
(Sec.~\ref{app:td_ablation}) and trajectory rollouts
(Sec.~\ref{app:rollouts}).

\subsection{Force Law Definitions}
\label{app:force_laws}

All simulations use five unit-mass particles in a 2D box of side
length~5.  Initial positions are drawn uniformly from
$[-2.5, 2.5]^2$; initial velocities from
$\mathcal{N}(0, 0.5^2)$.

The four force primitives acting on particle~$i$ due to particle~$j$
are:
\begin{equation}
\label{eq:gravity}
\mathbf{F}_\text{grav}
= -G \frac{\hat{\mathbf{r}}_{ij}}{(r_{ij}^2 + \epsilon^2)},
\qquad
G = 1.0,\; \epsilon = 0.1
\end{equation}
\begin{equation}
\label{eq:spring}
\mathbf{F}_\text{spring}
= -k\,(r_{ij} - r_0)\,\hat{\mathbf{r}}_{ij},
\qquad
k = 0.5,\; r_0 = 1.0
\end{equation}
\begin{equation}
\label{eq:drag}
\mathbf{F}_\text{drag}
= -\gamma\,\mathbf{v}_i,
\qquad
\gamma = 0.3
\end{equation}
\begin{equation}
\label{eq:lorentz}
\mathbf{F}_\text{lor}
= \omega \,\bigl(-v_{i,y},\, v_{i,x}\bigr),
\qquad
\omega = 2.0
\end{equation}
where $r_{ij} = \|\mathbf{r}_i - \mathbf{r}_j\|$ and
$\hat{\mathbf{r}}_{ij}$ is the unit direction from $j$ to $i$.  A
soft boundary dampens particles that drift beyond box size (position
scaled by 0.98, velocity by 0.9).  The integrator is fourth-order
Runge-Kutta with $\Delta t = 0.005$.

\paragraph{Choice of force forms.}
The four primitives were selected so that each carries a distinct
\emph{functional class}, ensuring that the compositional task tests
recombination of qualitatively different interactions rather than
sign- or scale-flipped variants of the same dynamics.

\textit{Gravity (Eq.~\ref{eq:gravity}).}  A softened Newtonian
$1/r^{2}$ attraction between unit masses, the canonical long-range
inverse-square force.  The softening $\epsilon\!=\!0.1$ removes the
$r\!\to\!0$ singularity without altering the long-range form.

\textit{Spring (Eq.~\ref{eq:spring}).}  Hooke's law with rest length
$r_{0}\!=\!1$: linear in displacement, equilibrium-seeking,
qualitatively unlike any inverse-power law, and the only primitive
whose sign flips at a finite separation.

\textit{Drag (Eq.~\ref{eq:drag}).}  Linear viscous damping in the
Stokes regime: a function of the focus body's own velocity rather
than relative position, and the only dissipative primitive.  This is
the one force that breaks energy conservation, deliberately included
to rule out approaches that assume conservative dynamics (e.g.\
Hamiltonian/Lagrangian neural networks).

\textit{Lorentz rotation (Eq.~\ref{eq:lorentz}).}  A velocity-dependent
rotation field of the form \(\omega(-v_{i,y},v_{i,x})\). Unlike the
other three primitives, this force depends directly on the current
velocity rather than relative displacement, making it qualitatively
distinct from gravity, spring, and drag. Its role in the benchmark is
to test whether models can recombine positional and velocity-dependent
mechanisms under unseen compositions without collapsing them into one
shared dense interaction.

\subsection{Body-Dropout Pareto on the n-body Benchmark}
\label{app:body-dropout-pareto}

The main figure (Fig.~\ref{fig:gravity_comp}) reports OOD MSE
aggregated over body counts. Aggregation hides whether a model that
looks competitive at training-time body count $n\!=\!5$ actually
holds up when bodies are dropped to $n\!=\!4,3$, and whether
degradation costs ID, OOD, or both. To expose this we plot, for each
model, the joint trajectory through (Median ID, Median OOD) space as
$n$ shrinks; the resulting Pareto figure makes each baseline's
failure mode geometrically diagnosable.

Figure~\ref{fig:body-dropout-pareto} places each model at
(Median ID MSE, Median OOD MSE) on log--log axes, with three points
per model -- one for each of $n\!=\!5,4,3$ -- connected by a thin
line. Large marker is $n\!=\!5$, small marker $n\!=\!3$. The diagonal
is the no-gap line ($\text{OOD} = \text{ID}$); points below it are
structurally impossible because the OOD pool is a strict superset of
the ID set.

\begin{figure}[h]
  \centering
  \includegraphics[width=0.95\linewidth]{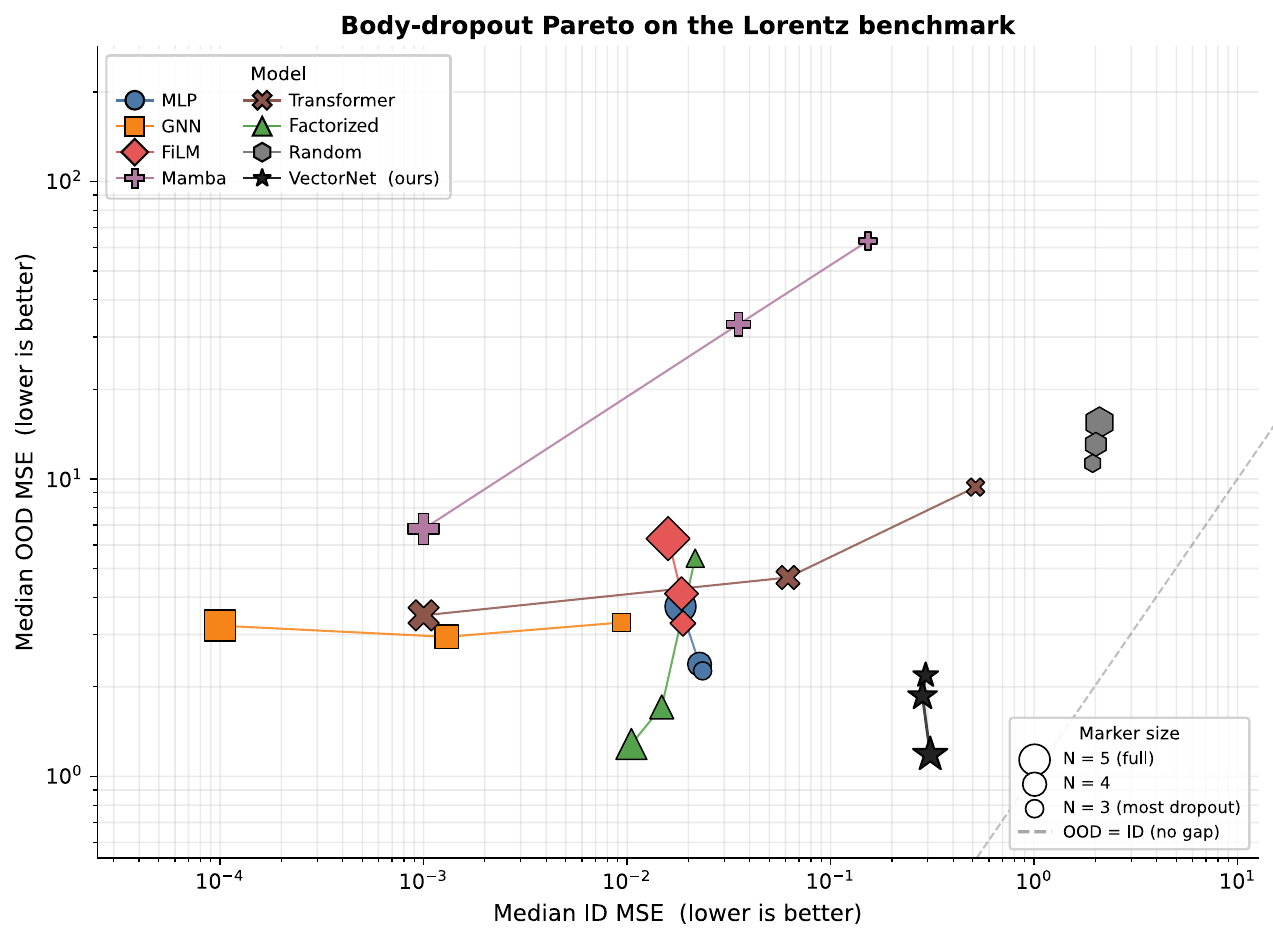}
  \caption{\textbf{Body-dropout Pareto on the n-body benchmark.}
    Median ID vs.\ Median OOD MSE for each model, evaluated at
    $n{=}5,4,3$ bodies (large $\rightarrow$ small marker).}
  \label{fig:body-dropout-pareto}
\end{figure}

Each model traces a characteristic geometric signature:
\textsc{Vector Network} (black star, lower-right) is the only model
that stays in the low-OOD region as $n$ shrinks; its trajectory
barely moves. \textsc{Mamba} (purple) blows up
\emph{vertically} -- an excellent $n\!=\!5$ ID fit but roughly two
orders of magnitude OOD growth as bodies are dropped, the signature
of an architecture that memorizes the training-time problem size.
\textsc{Transformer} (brown) blows up \emph{diagonally}: both ID and
OOD degrade together, ID by ${\sim}500\times$ and OOD by
${\sim}3\times$. \textsc{GNN} (orange) moves \emph{horizontally} --
its ID error grows by ${\sim}100\times$ while OOD barely changes,
the opposite kind of memorization (training body count baked into
the message passing). \textsc{MLP}, \textsc{FiLM}, and
\textsc{Factorized} cluster mid-plot, neither winning ID nor OOD;
\textsc{Random} (grey) fixes the predict-the-mean ceiling and is by
construction insensitive to $n$.

Aggregated MSE hides direction: most baselines that look
``competitive'' at fixed $n$ collapse along a specific axis (ID, OOD,
or both) as $n$ shifts, and each axis corresponds to a recognisable
failure mode. Vector Network is the only model that tolerates body
dropout while remaining in the low-OOD region, evidence that the
learned compositional structure does not bake the training-time
problem size into the operator the network synthesizes per sample.

\subsection{Top-Down Coupling Ablation}
\label{app:td_ablation}

Recipe-development sweeps consistently identified four qualitative
properties of the top-down coupling. First, a small global feedback
weight ($\lambda_{\text{td}}\!\approx\!0.3$) yields a small but
systematic OOD improvement at modest ID cost. Second, this benefit
does not compose with dead-atom re-initialization: the two
mechanisms compete for the same atom-assignment budget. Third,
splitting $\mathbf{W}_{\text{td}}$ per force backfires under the
per-force training budget, since each force-specific projection
sees only a quarter of the data. Fourth, the asymmetry between
top-down and bottom-up coupling is real -- inverting the
direction degrades every metric we tracked. We therefore use
$\lambda_{\text{td}}\!=\!0.3$ as a single global feedback weight in
all reported VN N-body results.

\subsection{Trajectory Rollout Analysis}
\label{app:rollouts}

The N-body benchmark evaluates single-step acceleration prediction,
which leaves open whether good single-step predictions actually
\emph{compose} into physically plausible long-horizon dynamics.
Compositional generalization in this setting is only meaningful if
the per-step predictions add up to stable trajectories, so we
additionally roll each model forward in time and measure how the
error accumulates.

Starting from the true state at $t\!=\!0$, we iterate (i)~construct
the 22-dim per-body input features from the current predicted state,
(ii)~query the model for the instantaneous acceleration
$\hat{\mathbf{a}}_i$ of each body, (iii)~advance velocities and
positions with fourth-order
Runge-Kutta with $\Delta t = 0.005$ described in
App.~\ref{app:force_laws}, and (iv)~apply
the same soft-boundary damping as the simulator (position scaled by
$0.98$, velocity by $0.9$ whenever a body drifts outside the box).
This is an \emph{open-loop} rollout: the model sees its own predicted
state at every step, so errors compound. The rollout covers 999
steps (5\,s of simulated time, $5\times$ longer than a training
sample).

Figure~\ref{fig:drift} shows position MSE vs.\ rollout step on the
ID single-force conditions (top row) and the OOD compositions
(bottom row), with shading showing $\pm 1\sigma$ over four initial
conditions. Vector Network (blue) maintains the lowest drift on five
of six panels, consistent with its strongest single-step accuracy on
the same conditions. The Factorized baseline is the most informative
counter-example: despite very low single-step MSE, it accumulates
position error faster than VN after $\sim\!200$ steps, because its
per-force sub-networks memorise local dynamics but cannot
self-correct once the accumulated state drifts outside the training
distribution. The remaining baselines spread across the panels
without a single model approaching VN's drift envelope.

\begin{figure}[t]
\centering
\includegraphics[width=\linewidth]{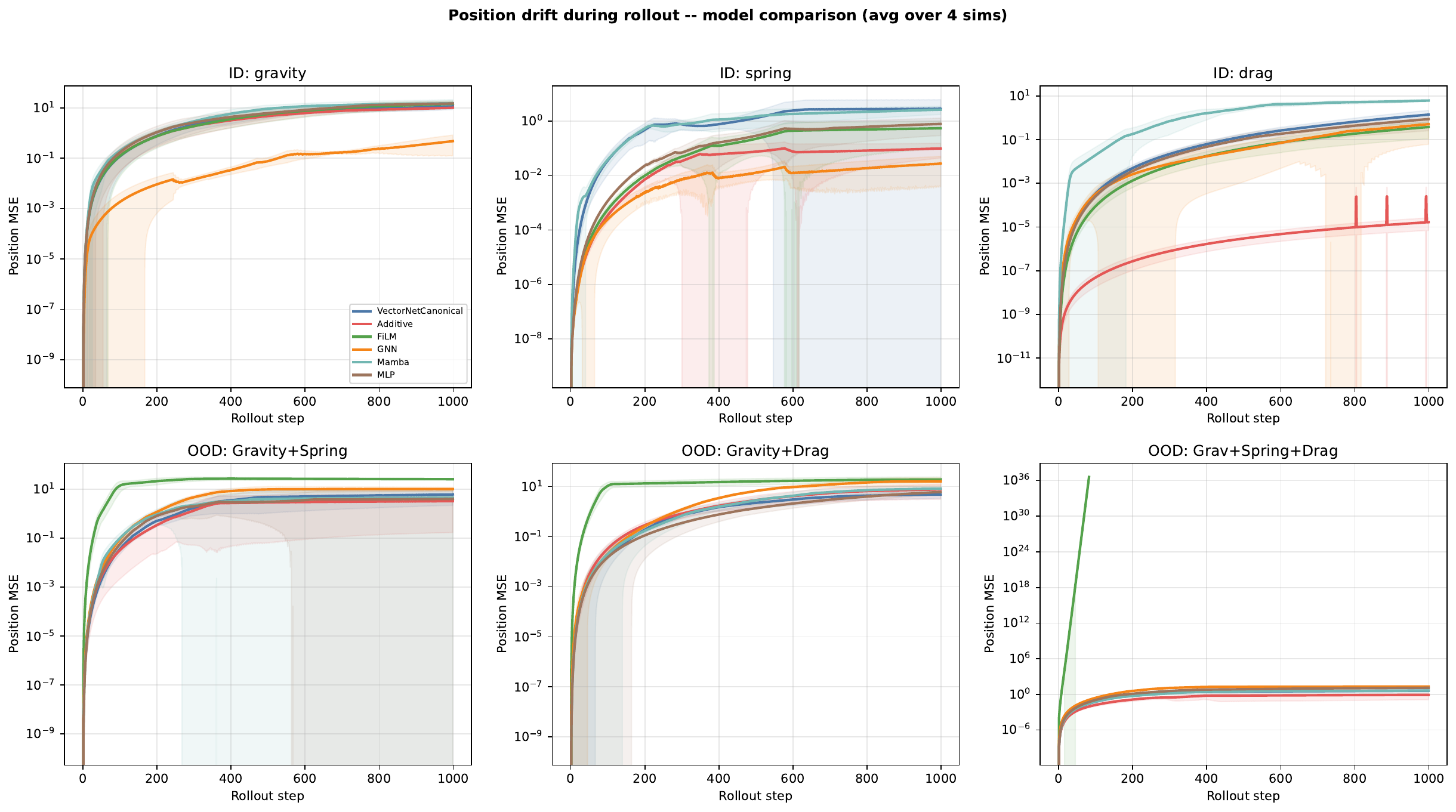}
\caption{%
  \textbf{Position drift during rollout} (5\,s, Runge--Kutta,
  avg.\ over 4 initial conditions, shading $=\,\pm 1\sigma$).  Top
  row: ID single forces; bottom row: OOD compositions.  Vector Network
  (blue) has the lowest average drift on five of six panels.
}
\label{fig:drift}
\end{figure}

Figure~\ref{fig:trajectories} visualizes ground-truth (top row) vs.\
model-predicted body trajectories (subsequent rows) for three OOD
compositions. For each condition we select one simulation to display
from 20 candidates using three filters: (i)~all bodies of the
ground-truth trajectory stay within $0.95\!\cdot\!L_{\text{box}}$ of
the origin (no boundary damping); (ii)~bodies traverse an average of
at least $2$ box units over the rollout (nontrivial dynamics);
and (iii)~among candidates that pass both, we pick the one that
minimizes Vector Network's rollout MSE -- a per-condition best case
for our model that nonetheless uses the same initial state for every
baseline. The figures show that VN trajectories largely overlap
ground truth, while baseline trajectories diverge in
characteristically different ways (identity-style straight lines,
boundary zig-zag, FiLM divergence).

\begin{figure}[t]
\centering
\includegraphics[width=0.88\linewidth]{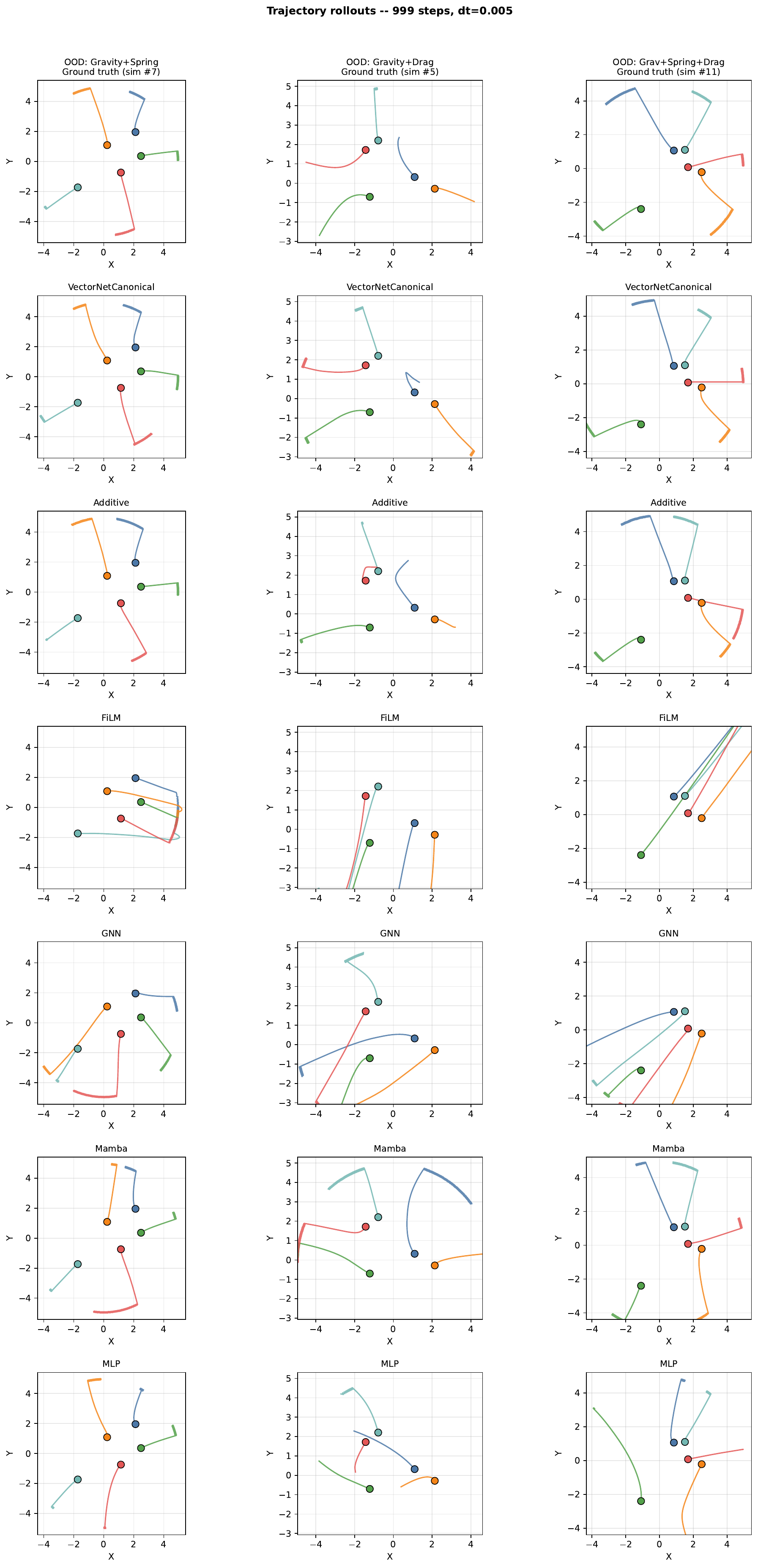}
\caption{%
  \textbf{Trajectory rollouts} under three OOD force compositions
  (columns).  Top row: ground truth; subsequent rows: model rollouts
  from the same initial state per column.  Axes are matched within
  each column.  Dots mark the $t\!=\!0$ positions.
}
\label{fig:trajectories}
\end{figure}

Single-step accuracy translates into long-horizon stability for
Vector Network, as shown by the lowest drift on five of six drift
panels and the closest qualitative match to ground truth on every
OOD trajectory. The visible rollout artefacts (straight-line
segments, soft-boundary zig-zag, FiLM divergence) reflect open-loop
integration of single-step predictors and do not alter the relative
ordering of models in Fig.~\ref{fig:drift}; the trajectory figures
are therefore best read as a qualitative complement to the drift
curves rather than as a standalone evaluation. Crucially, the
single-step \emph{vs.}\ rollout comparison shows that low single-step
MSE alone (e.g.\ Factorized) is not sufficient for compositional
stability over long horizons.

\section{MNIST Composition: Extended Results}
\label{app:mnist_extended}

This section extends the MNIST results from Fig.~\ref{fig:mnist_rts} with representation, sparsity, and convergence diagnostics. The goal is to show how the learned sparse codes use representational budget and how stable the inferred supports become during fast settling.

Figure~\ref{fig:mnist_extended} provides additional diagnostics for the
MNIST experiments beyond the main-paper OOD reconstruction errors.
These panels quantify how the learned sparse codes use representational
budget, how quickly the inferred supports stabilize during fast
settling, and how sparsity changes across depth. Together they show
that the MNIST VNs do not merely achieve low OOD error, but do so with
progressively more selective deeper codes whose active subsets lock in
reliably during inference.

\begin{figure}[H]
  \centering
  \IfFileExists{MNIST_figures/MNIST_extended.png}{%
    \includegraphics[width=\textwidth]{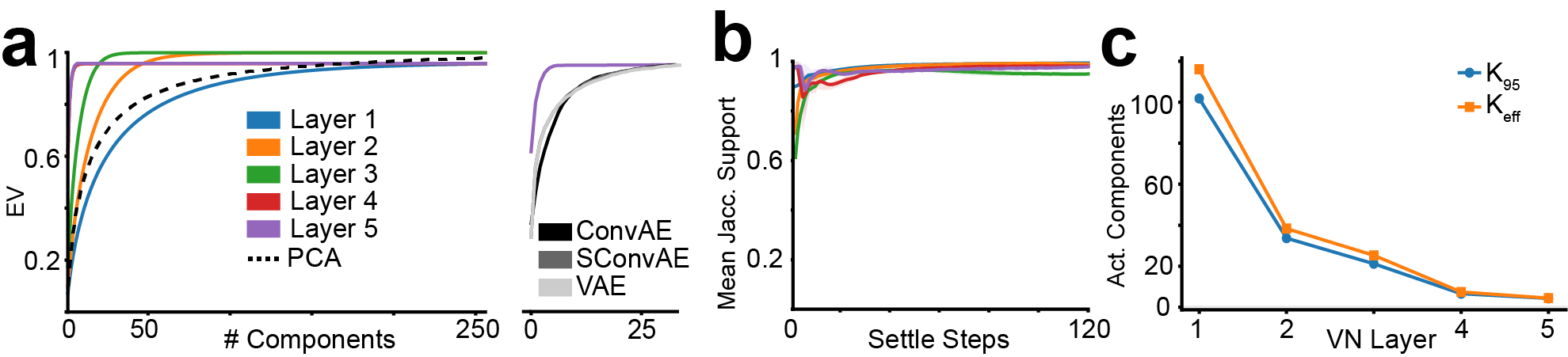}%
  }{%
    \fbox{\rule{0pt}{0.38\textheight}\rule{0.98\textwidth}{0pt}}%
  }
  \caption{\textbf{Extended MNIST diagnostics.}
  \textbf{(a)} Explained-variance versus representational-budget
  curves. The left panel compares VN layers against PCA by plotting the
  fraction of variance explained as the number of active atoms or
  components increases, showing how deeper VN layers trade denser
  reconstructions for progressively more selective sparse codes. The
  right panel compares the top-layer VN against autoencoder baselines
  at matched representational budgets, making explicit that the VN
  reaches strong reconstruction quality with fewer active components
  than dense latent baselines.
  \textbf{(b)} Support stability across 120 fast-settle sweeps,
  measured by overlap between successive sparse supports. This panel
  shows how quickly the active atom set locks in during recurrent
  inference: early sweeps allow support reconfiguration, after which
  the inferred sparse code becomes stable while the residual continues
  to refine.
  \textbf{(c)} Layerwise sparsity statistics, reported as
  \(k_{95}\) and \(k_{\mathrm{eff}}\). Here \(k_{95}\) is the number
  of active atoms needed to recover \(95\%\) of the final explained
  variance, while \(k_{\mathrm{eff}}\) summarizes the effective support
  size of the inferred code. The trend across layers shows the intended
  compression cascade: higher layers operate with tighter active
  subsets and more selective atom usage than lower layers.}
  \label{fig:mnist_extended}
\end{figure}

\section{Hardware and Storage Requirements}
\label{app:hardware}

This section summarizes the practical compute and storage requirements for reproducing the experiments in this paper. The intent is to make clear that the reported results do not rely on distributed training or unusually large hardware budgets.

All experiments in this paper were run on consumer-grade single-GPU
workstations.  The original benchmarks were developed and evaluated on
an NVIDIA RTX~2080~Ti (11\,GB VRAM); subsequent runs were reproduced
on a modern NVIDIA RTX~5090, which has substantial memory headroom and
shortens wall-clock training time but does not change any reported
result.  No multi-GPU training, distributed compute, or cluster
infrastructure is required.

System memory of 16\,GB is sufficient for every experiment in the
paper; the machines we used were equipped with 64\,GB, but peak
working-set memory never exceeded the 16\,GB budget.  Disk-storage
requirements are negligible: roughly 1\,GB of free space is enough
to hold the trained-model checkpoints, intermediate activations,
diagnostic JSON bundles, and rendered figures for all experiments
combined.


\end{document}